\documentclass{article}

\usepackage[T1]{fontenc}    
\usepackage{graphicx}
\usepackage{microtype}      
\usepackage{algorithm2e}
\usepackage{xcolor}
\definecolor{dgreen}{rgb}{0.0,0.545,0.0}

\graphicspath{{figures/}}
\usepackage{bm}
\usepackage{amsthm}
\usepackage{amssymb}
\usepackage{mathtools}
\newtheorem{theorem}{Theorem}[section]
\theoremstyle{definition}

\title{Nonseparable Symplectic Neural Networks}

\author{Shiying Xiong$^a$\thanks{shiying.xiong@dartmouth.edu},~~~~Yunjin Tong$^a$,~~~~Xingzhe He$^a$,~~~~Shuqi Yang$^a$,\\Cheng Yang$^b$,~~~~Bo Zhu$^a$\\
$a$ Dartmouth College, Hanover, NH, United States\\
$b$ ByteDance AI Lab, Beijing, China}

\begin{document}
\maketitle
\begin{abstract}

Predicting the behaviors of Hamiltonian systems has been drawing increasing attention in scientific machine learning. However, the vast majority of the literature was focused on predicting separable Hamiltonian systems with their kinematic and potential energy terms being explicitly decoupled while building data-driven paradigms to predict nonseparable Hamiltonian systems that are ubiquitous in fluid dynamics and quantum mechanics were rarely explored. The main computational challenge lies in the effective embedding of symplectic priors to describe the inherently coupled evolution of position and momentum, which typically exhibits intricate dynamics. To solve the problem, we propose a novel neural network architecture, Nonseparable Symplectic Neural Networks (NSSNNs), to uncover and embed the symplectic structure of a nonseparable Hamiltonian system from limited observation data. The enabling mechanics of our approach is an augmented symplectic time integrator to decouple the position and momentum energy terms and facilitate their evolution. We demonstrated the efficacy and versatility of our method by predicting a wide range of Hamiltonian systems, both separable and nonseparable, including chaotic vortical flows. We showed the unique computational merits of our approach to yield long-term, accurate, and robust predictions for large-scale Hamiltonian systems by rigorously enforcing symplectomorphism.

\end{abstract}

\section{Introduction}

\begin{figure}
\centering
  \includegraphics[width=0.66\textwidth]{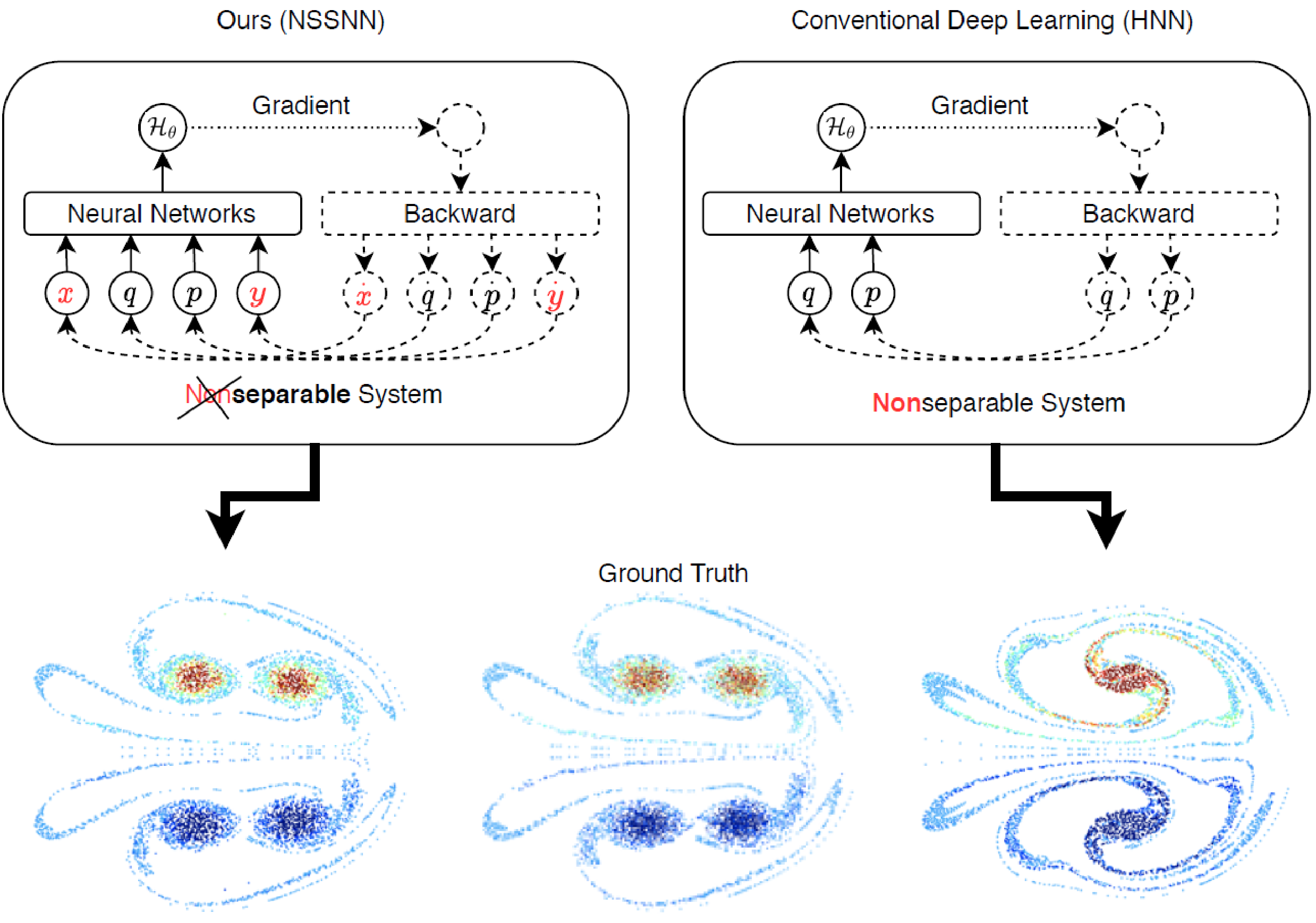}
  \caption{
  Comparison between NSSNN and HNN regarding the network design and prediction results of a vortex flow example.
  }.
  \label{fig:NSSNNvsHNN}
\end{figure}

A Hamiltonian dynamic system refers to a formalism for modeling a physical system exhibiting some specific form of energy conservation during its temporal evolution.
A typical example is a pendulum whose total energy (referred to as the system's Hamiltonian) is conserved as a temporally invariant sum of its kinematic energy and potential energy.
Mathematically, such energy conservation indicates a specific geometric structure underpinning its time integration, named as a symplectic structure, which further spawns a wide range of numerical time integrators to model Hamiltonian systems.
These symplectic time integrators have proven their effectiveness in simulating a variety of energy-conserving dynamics when Hamiltonian expressions are known as a prior.
Examples encompass applications in plasma physics \cite{Morrison2005}, electromagnetics \cite{Li2019}, fluid mechanics \cite{Salmon1988}, and celestial mechanics \cite{Saari1996}, to name a few.

On another front, the emergence of the various machine learning paradigms with their particular focus on uncovering the hidden invariant quantities and their evolutionary structures enable a faithful prediction of Hamiltonian dynamics without knowing its analytical energy expression beforehand.
The key mechanics underpinning these learning models lie in a proper embedding of the strong mathematical inductive priors to ensure Hamiltonian conservation in a neural network data flow.
Typically, such priors are realized in a variational way or a structured way.
For example, in \cite{Greydanus2019}, the Hamiltonian conservation is encoded in the loss function.
This category of methods does not assume any combinatorial pattern of the energy term and therefore relies on the inherent expressiveness of neural networks to distill the Hamiltonian structure from abundant training datasets \cite{choudhary2019physics}.
Another category of Hamiltonian networks, which we refer to as structured approaches, implements the conservation law indirectly by embedding a symplectic time integrator \cite{dipietro2020sparse,Tong2020,Chen2020} or composition of linear, activation, and gradient modules \cite{Jin2020} into the network architecture.

One of the main limitations of the current structured methods lies in the separable assumption of the Hamiltonian expression.
Examples of separable Hamiltonian systems include the pendulum, the Lotka--Volterra \cite{zhu2016}, the Kepler \cite{Antohe2004}, and the H\'enon--Heiles systems \cite{Zotos2015}. However, beyond this scope, there exist various nonseparable systems whose Hamiltonian has no explicit expression to decouple the position and momentum energies.
Examples include incompressible flows \cite{Suzuki2007}, quantum systems \cite{Bonnabel2009}, rigid body dynamics \cite{Chadaj2017}, charged particle dynamics \cite{Zhang2016}, and nonlinear Schr\"odinger equation \cite{Brugnano2018}.
This nonseparability typically causes chaos and instability, which further complicates the systems' dynamics.
Although SympNet in \cite{Jin2020} can be used to learn and predict nonseparable Hamiltonian systems, multiple matrices of the same order with system dimension are needed in the training process of SympNet, resulting in difficulties in generalizing into high-dimensional large-scale N-body problems which are common in a series of nonseparable Hamiltonian systems, such as quantum multibody problems and vortex-particle dynamics problems.
Such chaotic and large-scale nature jointly adds shear difficulties for a conventional machine learning model to deliver faithful predictions.

In this paper, we propose an effective machine learning paradigm to predict nonseparable Hamiltonian systems.
We build a novel neural network architecture, named nonseparable symplectic neural networks (NSSNNs), to enable accurate and robust predictions of long-term Hamiltonian dynamics based on short-term observation data.
Our proposed method belongs to the category of structured network architectures:
it intrinsically embeds the symplectomorphism into the network design to strictly preserve the symplectic evolution and further conserves the unknown, nonseparable Hamiltonian energy. The enabling techniques we adopted in our learning framework consist of an augmented symplectic time integrator to asymptotically “decouple” the position and momentum quantities that were nonseparable in their original form.
We also introduce the Lagrangian multiplier in the augmented phase space to improve the system's numerical stability.
Our network design is motivated by ideas originated from physics \cite{Tao2016} and optimization \cite{boyd2004}.
The combination of these mathematical observations and numerical paradigms enables a novel neural network architecture that can drastically enhance both the scale and scope of the current predictions.

We show a motivational example in Figure~\ref{fig:NSSNNvsHNN} by comparing our approach with a traditional HNN method \cite{Greydanus2019} regarding their structural designs and predicting abilities. We refer the readers to Section~\ref{subsec:vortex} for a detailed discussion.
As shown in Figure~\ref{fig:NSSNNvsHNN}, the vortices evolved using NSSNN are separated nicely as the ground truth, while the vortices merge together using HNN due to the failure of conserving the symplectic structure of a nonseparable system.
The conservative capability of NSSNN springs from our design of the auxiliary variables (red $x$ and $y$) which converts the original nonseparable system into a higher dimensional quasi-separable system where we can adopt a symplectic integrator.

\section{Related works}
\paragraph{Data-driven physical prediction. } Data-driven approaches have been widely applied in physical systems including fluid mechanics \cite{Brunton2020}, wave physics \cite{Hughes2019}, quantum physics \cite{Sellier2019}, thermodynamics \cite{hernandez2020}, and material science \cite{Teicherta2019}. Among these different physical systems, data-driven fluid receives increasing attention. We refer the readers to \cite{Brunton2020} for a thorough survey of the fundamental machine learning methodologies as well as their uses for understanding, modeling, optimizing, and controlling fluid flows in experiments and simulations based on training data.
One of the motivations of our work is to design a versatile learning approach that can predict complex fluid motions.
On another front, many pieces of research focus on incorporating physical priors into the learning framework, e.g., by enforcing incompressibility \cite{mohan2020}, the Galilean invariance \cite{ling2016}, quasistatic equilibrium \cite{geng2020}, the Lagrangian invariance \cite{cranmer2020}, and Hamiltonian conservation \cite{ hernandez2020,Greydanus2019,Jin2020,Zhong2020Symplectic}. Here, inspired by the idea of embedding physics priors into neural networks, we aim to accelerate the learning process and improve the accuracy of our model.

\paragraph{Neural networks for Hamiltonian systems.}
\cite{Greydanus2019} introduced Hamiltonian neural networks (HNNs) to conserve the Hamiltonian energy of the system by reformulating the loss function. Inspired by HNN, a series of methods intrinsically embedding a symplectic integrator into the recurrent neural network was proposed, such as SRNN \cite{Chen2020}, TaylorNet \cite{Tong2020} and SSINN \cite{dipietro2020sparse}, to solve separable Hamiltonian systems. Combined with graph networks \cite{sanchez2019hamiltonian,battaglia2016interaction}, these methods were further generalized to large-scale N-body problems induced by interaction force between the particle pairs. \cite{Jin2020} proposed SympNet by directly constructing the symplectic mapping of system variables within neighboring time steps to handle both separable and nonseparable Hamiltonian systems. However,
the scale of parameters in SympNet for training $N$ dimensional Hamiltonian system is $O(N^2)$, which makes it hard to be generalized to the high dimensional N-body problems. Our NSSNN overcomes these limitations by devising a new Hamiltonian network architecture that is specifically suited for nonseparable systems (see details in Section~\ref{subsec:comparison}).
In addition, the Hamiltonian-based neural networks can be extended to further applications.
\cite{Toth2020} developed the Hamiltonian Generative Network (HGN) to learn Hamiltonian dynamics from high-dimensional observations (such as images). Moreover, \cite{Zhong2020Symplectic} introduced Symplectic ODE-Net (SymODEN), which adds an external control term to the standard Hamiltonian dynamics.

\section{Framework}\label{sec:Framewor}

\subsection{Augmented Hamiltonian equation}\label{subsec:symp_int}

We start by considering a Hamiltonian system with $N$ pairs of canonical coordinates (i.e. $N$ generalized positions and $N$  generalized momentum). The time evolution of canonical coordinates is governed by the symplectic gradient of the Hamiltonian \cite{Hand2008}.
Specifically, the time evolution of the system is governed by Hamilton's equations as
\begin{equation}
\frac{\textrm{d} \bm{q}}{\textrm{d} t} = \frac{\partial \mathcal {H}}{\partial \bm{p}} ,~~~~
\frac{\textrm{d} \bm{p}}{\textrm{d} t} =-\frac{\partial \mathcal {H}}{\partial \bm{q}},
\label{eq:Hamilton}
\end{equation}
with the initial condition  $(\bm{q},\bm{p})|_{t=t_0} = (\bm q_0,\bm p_0)$.
In a general setting, $\bm{q}=(q_1,q_2,\cdots,q_N)$ represents the positions and $\bm{p}=(p_1,p_2,...p_N)$ denotes their momentum. Function $\mathcal H = \mathcal H(\bm q, \bm p)$ is the Hamiltonian, which corresponds to the total energy of the system. An important feature of Hamilton’s equations is its symplectomorphism (see Appendix \ref{sec:Sym} for a detailed overview).

The symplectic structure underpinning our proposed network architecture draws inspirations from the original research of \cite{Tao2016} in computational physics.
In \cite{Tao2016}, a generic, high-order, explicit and symplectic time integrator was proposed to solve (\ref{eq:Hamilton}) of an arbitrary separable and nonseparable Hamiltonian $\mathcal{H}$. This is implemented by considering an augmented Hamiltonian
\begin{equation}
    \overline{\mathcal{H}}(\bm q, \bm p, \bm x, \bm y) :=\mathcal{H}_A + \mathcal{H}_B + \omega \mathcal{H}_C
    \label{eq:overlineH}
\end{equation}
with
\begin{equation}
    \mathcal{H}_A = \mathcal{H} (\bm q, \bm y),~~\mathcal{H}_B = \mathcal{H} (\bm x, \bm p),~~\mathcal{H}_C = \frac{1}{2} \left(\|\bm q - \bm x\|_2^2+\|\bm p - \bm y\|_2^2\right)
\end{equation}
in an extended phase space with symplectic two form $\textrm{d} \bm q \wedge
\textrm{d} \bm p +\textrm{d} \bm x \wedge
\textrm{d} \bm y$, where $\omega$ is a constant that controls the
binding of the original system and the artificial restraint.

Notice that the Hamilton's equations for $\overline{\mathcal H}$
\begin{equation}
\begin{dcases}
\frac{\textrm{d} \bm{q}}{\textrm{d} t} = \frac{\partial \overline{\mathcal {H}}}{\partial \bm{p}} =\frac{\partial \mathcal {H}(\bm x, \bm p)}{\partial \bm{p}}+\omega (\bm p - \bm y),\\
\frac{\textrm{d} \bm{p}}{\textrm{d} t} =-\frac{\partial \overline{\mathcal {H}}}{\partial \bm{q}}=-\frac{\partial \mathcal {H}(\bm q, \bm y)}{\partial \bm{q}}-\omega (\bm q - \bm x),\\
\frac{\textrm{d} \bm{x}}{\textrm{d} t} = \frac{\partial \overline{\mathcal {H}}}{\partial \bm{y}} =\frac{\partial \mathcal {H}(\bm q, \bm y)}{\partial \bm{y}}-\omega (\bm p - \bm y),\\
\frac{\textrm{d} \bm{y}}{\textrm{d} t} =-\frac{\partial \overline{\mathcal {H}}}{\partial \bm{x}}=-\frac{\partial \mathcal {H}(\bm x, \bm p)}{\partial \bm{x}}+\omega (\bm q - \bm x),\\
\end{dcases}
\label{eq:overlineHamilton}
\end{equation}
with the initial condition $(\bm{q},\bm{p},\bm{x},\bm{y})|_{t=t_0} = (\bm q_0,\bm p_0,\bm q_0,\bm p_0)$ have the same exact solution as (\ref{eq:Hamilton}) in the sense that $(\bm{q},\bm{p},\bm{x},\bm{y}) = (\bm q,\bm p,\bm q,\bm p)$. Hence, we can get the solution of (\ref{eq:Hamilton}) by solving (\ref{eq:overlineHamilton}). Furthermore, it is possible to construct high-order symplectic integrators for $ \overline{\mathcal {H}}$ in (\ref{eq:overlineHamilton}) with explicit updates. Our model aims to learn the dynamical evolution of $(\bm q, \bm p)$ in (\ref{eq:Hamilton}) by embedding (\ref{eq:overlineHamilton}) into the framework of NeuralODE \cite{chen2018neural}. The coefficient $\omega$ acts as a regularizer, which stabilizes the numerical results (see Section \ref{sec:Training}).

\subsection{Nonseparable Hamiltonian Neural Network}
\label{subsec:SNN}
We learn the nonseparable Hamiltonian dynamics (\ref{eq:Hamilton}) by constructing an augmented system (\ref{eq:overlineHamilton}), from which we can obtain the energy function $\mathcal {H}(\bm q,\bm p)$ by training the neural network $\mathcal {H}_{\theta} (\bm q, \bm p)$ with parameter $\bm \theta$ and calculate the gradient $ \bm \nabla \mathcal {H}_{\theta}(\bm q, \bm p)$ by taking the in-graph gradient.
For the constructed network $\mathcal{H}_{\theta}(\bm q, \bm p)$, we integrate (\ref{eq:overlineHamilton}) by using the second-order symplectic integrator \cite{Tao2016}. Specifically, we will have an input layer $(\bm q,\bm p, \bm x, \bm y) = (\bm q_0,\bm p_0,\bm q_0,\bm p_0)$ at $t = t_0$ and an output layer $(\bm q,\bm p, \bm x, \bm y) = (\bm q_n,\bm p_n,\bm x_n,\bm y_n)$ at $t = t_0 + n \textrm{d} t$.

\RestyleAlgo{ruled}
\begin{algorithm}
  \caption{Integrate (\ref{eq:overlineHamilton}) by the second-order symplectic integrator}
  \label{alg:int_net}
  \SetAlgoLined
\KwIn{
  $\bm q_0,\bm p_0,t_0,t,\textrm{d}t$; $\;$
  $\bm \phi_1^{\delta}$, $\bm \phi_2^{\delta}$, and $\bm \phi_3^{\delta}$ in (\ref{eq:phi1}) and (\ref{eq:phi2});}
  \KwOut{
  $(\hat {\bm q}, \hat {\bm p}, \hat {\bm x},\hat {\bm y})=(\bm q_{n}, \bm p_{n},\bm x_{n}, \bm y_{n})$}
  $(\bm q_0,\bm p_0,\bm x_0, \bm y_0)=(\bm q_0,\bm p_0,\bm q_0,\bm p_0)$ \;
  $n = \textrm{floor}[(t-t_0)/\textrm{d}t]$ \;
  \For{$i = 1 \to n$}{
  ~~~~$(\bm q_i,\bm p_i,\bm x_i,\bm y_i) = \bm \phi_1^{\textrm{d} t/2}\circ \bm \phi_2^{\textrm{d} t/2}\circ  \bm \phi_3^{\textrm{d} t}\circ \bm \phi_2^{\textrm{d} t/2}\circ \bm \phi_1^{\textrm{d} t/2}\circ (\bm q_{i-1},\bm p_{i-1},\bm x_{i-1},\bm y_{i-1})$;
  }
\end{algorithm}

The recursive relations of $(\bm q_i,\bm p_i,\bm x_i,\bm y_i), i = 1,2,\cdots,n$, can be expressed by the algorithm \ref{alg:int_net} (also see Figure \ref{fig:NSSNNnet} in Appendix \ref{sec:architecture}). The input functions $\bm \phi_{1}^{\delta}(\bm q,\bm p,\bm x,\bm y)$, $\bm \phi_{2}^{\delta}(\bm q,\bm p,\bm x,\bm y)$, and $\bm \phi_{3}^{\delta}(\bm q,\bm p,\bm x,\bm y)$ in algorithm \ref{alg:int_net} are
\begin{equation}
    \begin{bmatrix}
    \bm q\\
    \bm p-\delta [\partial \mathcal{H}_{\theta}(\bm q,\bm y)/\partial \bm q ]\\
    \bm x+\delta [\partial \mathcal{H}_{\theta}(\bm q,\bm y)/\partial \bm p ]\\
    \bm y
    \end{bmatrix},~
    \begin{bmatrix}
    \bm q+\delta [\partial \mathcal{H}_{\theta}(\bm x,\bm p)/\partial \bm p ]\\
    \bm p\\
    \bm x\\
    \bm y-\delta [\partial \mathcal{H}_{\theta}(\bm x,\bm p)/\partial \bm q ]
    \end{bmatrix},
    \label{eq:phi1}
\end{equation}
and
\begin{equation}
    \frac12 \begin{bmatrix}
    \begin{pmatrix}
    \bm q+\bm x\\
    \bm p+\bm y\\
    \end{pmatrix}+\bm R^\delta \begin{pmatrix}
    \bm q-\bm x\\
    \bm p-\bm y\\
    \end{pmatrix}\\
    \begin{pmatrix}
   \bm q+\bm x\\
    \bm p+\bm y\\
    \end{pmatrix}-\bm R^\delta \begin{pmatrix}
    \bm q-\bm x\\
    \bm p-\bm y\\
    \end{pmatrix}\\
    \end{bmatrix},
    \label{eq:phi2}
\end{equation}
respectively. Here
\begin{equation}
    \bm R^\delta := \begin{bmatrix}
    \cos (2 \omega \delta) \bm I & \sin (2 \omega \delta) \bm I\\
    -\sin (2 \omega \delta) \bm I&\cos (2 \omega \delta) \bm I
    \end{bmatrix},~~\textrm{where} ~\bm I~ \textrm{is a identity matrix}.
\end{equation}

We remark that $\bm x$ and $\bm y$ are just auxiliary variables, which are theoretically equal to $\bm q$ and $\bm p$. Therefore, we can use the data set of $(\bm q, \bm p)$ to construct the data set containing variables $(\bm q, \bm p, \bm x, \bm y)$.
In addition, by constructing the network $\mathcal{H}_{\theta}$, we show that theorem \ref{thm:sym_phi} in Appendix \ref{sec:Sym} holds, so the networks $\bm \phi_1^{\delta},\bm \phi_2^{\delta}$, and $\bm \phi_3^{\delta}$ in (\ref{eq:phi1}) and (\ref{eq:phi2}) preserve the symplectic structure of the system.
Suppose that $\Phi_1$ and $\Phi_2$ are two symplectomorphisms. Then, it is easy to show that their composite map $\Phi_2\circ \Phi_1$ is also symplectomorphism due to the chain rule. Thus, the symplectomorphism of algorithm \ref{alg:int_net} can be guaranteed by the theorems \ref{thm:sym_phi}.

\section{Training settings and ablation tests} \label{sec:Training}

We use 6 linear layers with hidden size 64 to model $\mathcal H_\theta$, all of which are followed by a Sigmoid activation function except the last one. The derivatives $\partial \mathcal H_\theta / \partial \bm p, \partial \mathcal H_\theta/\partial \bm q, \partial \mathcal H_\theta/\partial \bm x, \partial \mathcal H_\theta / \partial \bm y$ are all obtained by automatic differentiation in Pytorch \cite{paszke2019pytorch}. The weights of the linear layers are initialized by Xavier initializaiton \cite{glorot2010understanding}.

We generate the dataset for training and validation using high-precision numerical solver \cite{Tao2016}, where the ratio of training and validation datasets is $9:1$.
We set the dataset $(\bm q_0^{j},\bm p_0^{j})$ as the start input and $(\bm q^{j}, \bm p^{j})$ as the target with $j= 1,2,\cdots,N_s$, and the time span between $(\bm q_0^{j},\bm p_0^{j})$ and $(\bm q^{j}, \bm p^{j})$ is $T_{train}$. Feeding $(\bm q_0,\bm p_0) = (\bm q_0^j, \bm p_0^{j}),~t_0=0,~t = T_{train}$, and time step $\textrm{d}t$ in Algorithm \ref{alg:int_net} to get the predicted variables  $(\hat { \bm q}^j,\hat { \bm p}^j,\hat { \bm x}^j,\hat { \bm y}^j)$. Accordingly, the loss function is defined as
\begin{equation}
\begin{aligned}
&\mathcal L_{NSSNN}\\
=&\frac{1}{N_{b}}\sum_{j=1}^{N_{b}} \|\bm q^{(j)}-\hat{\bm q}^{(j)}\|_1+\|\bm p^{(j)}-\hat{\bm p}^{(j)}\|_1+\|\bm q^{(j)}-\hat{\bm x}^{(j)}\|_1+\|\bm p^{(j)}-\hat{\bm y}^{(j)}\|_1,
\label{eq:loss}
\end{aligned}
\end{equation}
where $N_{b}=512$ is the batch size of the training samples. We use the Adam optimizer \cite{kingma2014adam} with learning rate 0.05. The learning rate is multiplied by 0.8 for every 10 epoches.

Taking system $\mathcal H(q,p) = 0.5(q^2+1)(p^2+1)$ as an example, we carry out a series of ablation tests based on our constructed networks. Normally, we set the time span, time step and dateset size as $T= 0.01$, $\textrm{d}t = 0.01$ and $N_s=1280$.

The choice of $\omega$ in (\ref{eq:overlineHamilton}) is largely flexible since NSSNN is not sensitive to the parameter $\omega$ when it is larger than a certain threshold. Figure \ref{fig:different_omega} shows the training and validation losses with different $\omega$ in the network trained by clean and noise datasets. Though the convergence rates are slightly different in a small scope, the examples with various $\omega$ are able to converge to the same size of training and validation losses. Here, we set $\omega=2000$, but $\omega$ can be smaller than $2000$. The only requirement for picking $\omega$ is that it has to be larger than $O(10)$, which is detailed in Appendix \ref{sec:omega}.
\begin{figure}
  \centering
  \includegraphics[width=0.95\linewidth]{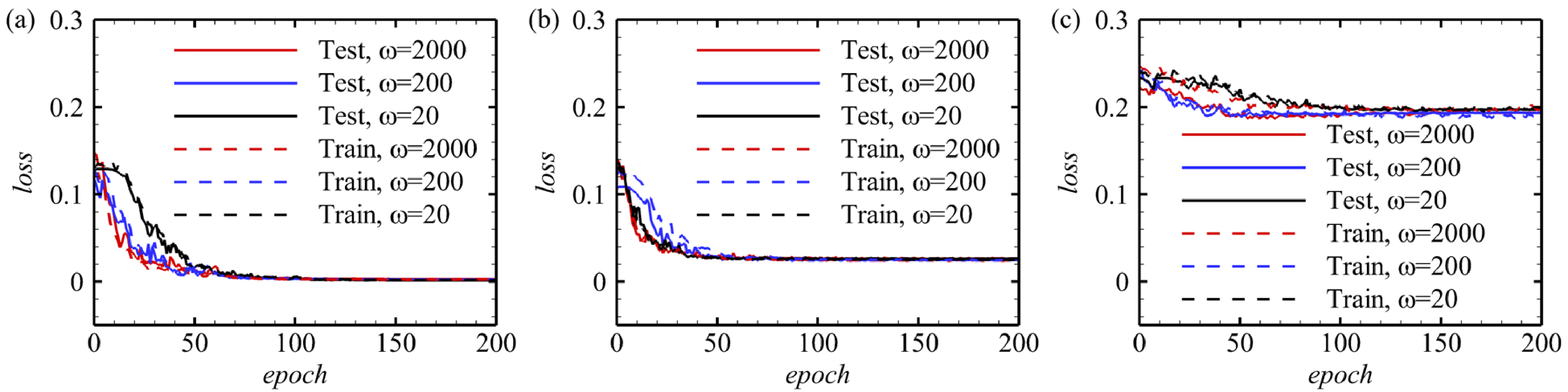}
  \caption{Comparisons of training and validation losses with different $\omega$ in the training process.
  The solid and dashed lines represent training and validation losses respectively.
  The networks are trained by (a) the clean dataset, (b) the dataset with noise $\sim 0.05U(-1,1)$, and (c) the dataset with noise $\sim 0.4U(-1,1)$.}
  \label{fig:different_omega}
\end{figure}

We pick the $L1$ loss function to train our network due to its better performance. Figure \ref{fig:different_loss} compares the validation losses with different training loss functions in the network trained by clean and noise datasets. Figure \ref{fig:different_loss}(a) shows that either the network trained by $L1$ or MSE with a clean dataset can converge to a small validation loss, but the network trained by $L1$ loss converges relatively faster. Figures \ref{fig:different_loss}(b) and \ref{fig:different_loss}(c) both show that the network trained by $L1$ with noise dataset can converge to a smaller validation loss. In addition, we already introduced a regularization term in the symplectic integrator embedded in the network; thus, there is no need to add the regularization term in the loss function.

\begin{figure}
  \centering
  \includegraphics[width=0.95\linewidth]{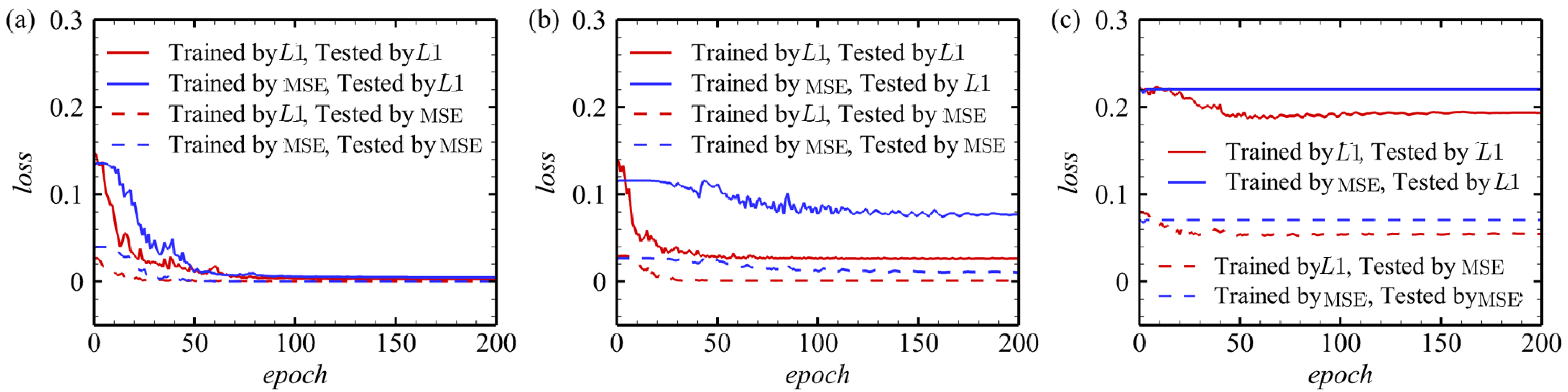}
  \caption{Comparisons of validation losses with different training loss functions in the training process. The red solid and dashed lines represent the networks trained by $L1$ loss function and validated by $L1$ and MSE respectively; the blue solid and dashed lines represent the networks trained by MSE and validated by $L1$ and MSE respectively. The networks are trained by (a) the clean dataset, (b) the dataset with noise $\sim 0.05U(-1,1)$, and (c) the dataset with noise $\sim 0.4U(-1,1)$.}
  \label{fig:different_loss}
\end{figure}

The integral time step in the sympletic integrator is a vital parameter, and the choice of $\textrm{d}t$ largely depends on the time span $T_{train}$. Figure \ref{fig:different_timestep} compares the validation losses generated by various integral time steps $\textrm{d}t$ based on fixed dataset time spans $T_{train}=0.01$, $0.1$ and $0.2$ respectively in the training process. The validation loss converges to a similar degree with various $\textrm{d} t$ based on fixed $T_{train}=0.01$ and $T_{train} = 0.1$ in \ref{fig:different_timestep}(a) and (b), while it increases significantly as $\textrm{d}t$ increases based on fixed $T_{train}=0.02$ in \ref{fig:different_timestep}(c). Thus, we should take relatively small $\textrm{d}t$ for the dataset with larger time span $T_{train}$

\begin{figure}
  \centering
  \includegraphics[width=0.95\linewidth]{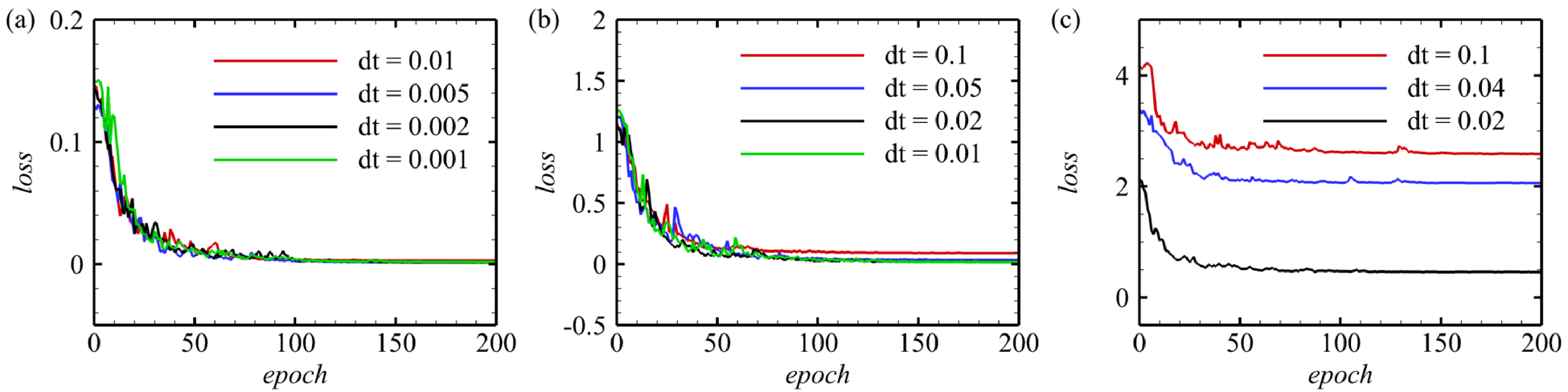}
  \caption{Comparisons of validation losses with different $\textrm{d}t$ in the training process.
  (a), (b), and (c) are trained based on different time spans $T_{train} = 0.01$, $0.1$, and $0.2$, respectively.}
  \label{fig:different_timestep}
\end{figure}
\section{Comparisons with other methods}\label{subsec:comparison}
\subsection{Methodologies}
We compare our method with other recently proposed methods, such as HNN \cite{Greydanus2019}, NeuralODE \cite{chen2018neural}, TaylorNet \cite{Tong2020}, SSINN \cite{dipietro2020sparse}, SRNN \cite{Chen2020}, and SympNet \cite{Jin2020}. There are several features distinguishing our method from others, as shown in Table \ref{tab:comparison}.
HNN first enforces conservative features of a Hamiltonian system by reformulating its loss function, which incurs two main shortcomings. On the one hand, it requires the temporal derivatives of the momentum and the position of the systems to calculate the loss function, which is difficult to obtain from real-world systems. On the other hand, HNN doesn't strictly preserve the symplectic structure, because its symplectomorphism is realized by its loss function rather than its intrinsic network architecture. NeuralODE successfully bypasses the time derivatives of the datasets by incorporating an integrator solver into the network architecture.

Embedding the Hamiltonian prior into the NeuralODE, a series of methods are proposed, such as SRNN, SSINN, and TaylorNet, to predict the continuous trajectory of system variables; however, presently these methods are only designed to solve separable Hamiltonian systems. Instead of updating the continuous
dynamics by integrating the neural networks in NeuralODE, SympNet
adopts a symplectomorphism composed of well-designed both linear and non-linear matrices to intrinsically map the system variables within neighboring time steps. However, the parameters scale in the matrix map for training $N$ dimensional Hamiltonian system in SympNet is $O(N^2)$, which makes it hard to generalize to the high dimensional N-body problems.
For example, in Section \ref{subsec:vortex}, we predict the dynamic evolution of 6000 vortex particles, which is challenging for the training process of the SympNet on the level of $O(6000^2)$.

\begin{table}
  \caption{Comparison between HNN \cite{Greydanus2019}, NeuralODE \cite{chen2018neural}, TaylorNet \cite{Tong2020}, SSINN \cite{dipietro2020sparse}, SRNN \cite{Chen2020}, SympNet \cite{Jin2020}, and NSSNN. $\checkmark$ represents the method preserves such property.}
  \centering
  \scriptsize
  \setlength{\tabcolsep}{0.1mm}{
  \begin{tabular}{lccccccc}
  \hline
  Methods & NSSNN & HNN & NeuralODE & TaylorNet & SSINN &SRNN& SympNet\\
  \hline
  \textbf{Solve nonseparable systems}&$\checkmark$&$\checkmark$&$\checkmark$&&&&$\checkmark$\\
  Solve separable systems&$\checkmark$&$\checkmark$&$\checkmark$&$\checkmark$&$\checkmark$&$\checkmark$&$\checkmark$\\
  Preserve symplectic structure&$\checkmark$&Partially&&$\checkmark$&Partially&$\checkmark$&$\checkmark$\\
  Utilize continuous dynamics &$\checkmark$&&$\checkmark$&$\checkmark$&$\checkmark$&$\checkmark$&$\checkmark$\\
  No need for derivatives in dataset&$\checkmark$&&$\checkmark$&$\checkmark$&$\checkmark$&$\checkmark$&$\checkmark$\\
  Long-term predictability&$\checkmark$&Partially&&$\checkmark$&$\checkmark$&$\checkmark$&$\checkmark$\\
  Extend to N-body system&$\checkmark$&$\checkmark$&$\checkmark$&$\checkmark$&&$\checkmark$&\\
  \hline
  \end{tabular}}
  \label{tab:comparison}
\end{table}

NSSNN overcomes the weaknesses mentioned above. Under the framework of NeuralODE, NSSNN utilizes continuously-defined dynamics in the neural networks, which gives it the capability to learn the continuous-time evolution of dynamical systems. Based on \cite{Tao2016}, NSSNN embeds the symplectic prior into the nonseparable symplectic integrator to ensure the strict symplectomorphism, thereby guaranteeing the property of
long-term predictability.
In addition, unlike SympNet, NSSNN is highly flexible and can be generalized to high dimensional N-body problems by involving the interaction networks \cite{sanchez2019hamiltonian}, which will be further discussed in Section \ref{subsec:vortex}.

\subsection{Experiments}

We compare five implementations that learn and predict Hamiltonian systems. The first one is NeuralODE, which trains the system by embedding the network $\bm f_{\theta}\to (\textrm{d} \bm q/\textrm{d} t, \textrm{d} \bm p/\textrm{d} t)$ into the Runge-Kutta (RK) integrator. The other four, however, achieve the goal by fitting the Hamiltonian $\mathcal H_{\theta}\to \mathcal H$ based on (\ref{eq:Hamilton}). Specifically, HNN trains the network with the constraints of the Hamiltonian symplectic gradient along with the time derivative of system variables and then embeds the well-trained $\mathcal H_{\theta}$ into the RK integrator for predicting the system. The third and fourth implementations are ablation tests. One of them is improved HNN (IHNN), which embeds the well-trained $\mathcal H_{\theta}$ into the nonseparable symplectic integrator (Tao's integrator) for predicting. The other is to directly embed $\mathcal H_{\theta}$ into the RK integrator for training, which we call HRK. The fifth method is NSSNN, which embeds $\mathcal H_{\theta}$ into the nonseparable symplectic integrator for training.

\begin{figure}
  \centering
  \includegraphics[width=1.0\linewidth]{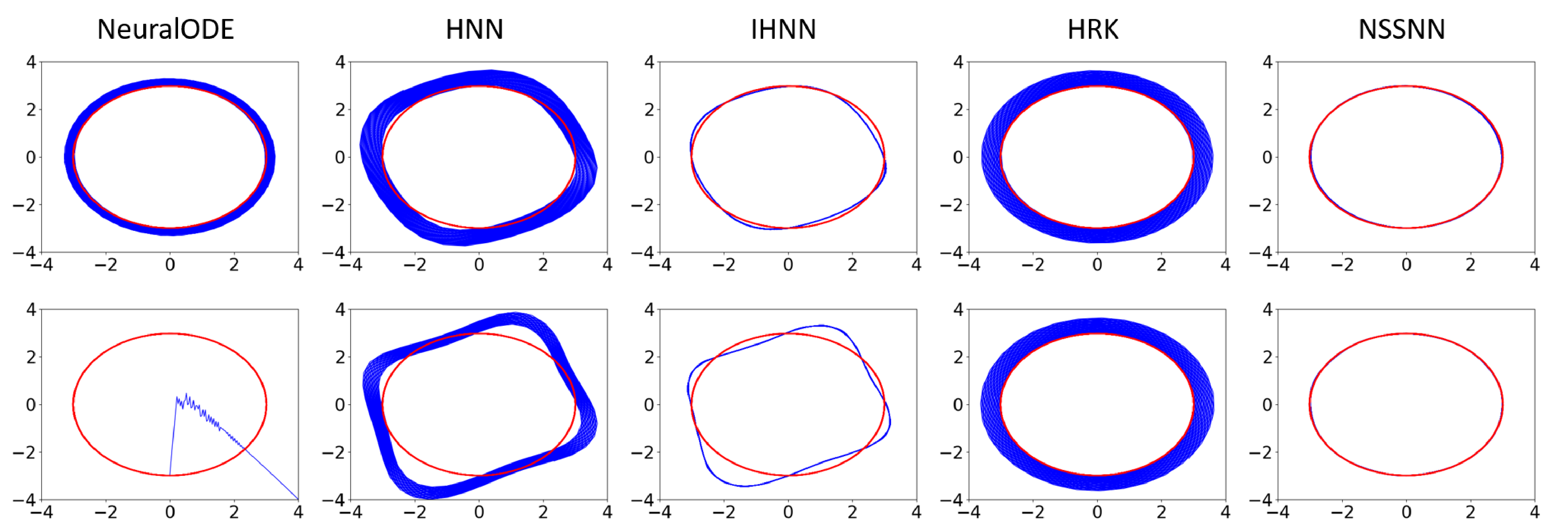}
  \caption{Comparison of prediction results of $(q,p)$ for the spring system $\mathcal H= 0.5(q^2+ p^2)$ from $t= 0$ to $t= 200$ with $(q_0,p_0) = (0,-3)$. The time span of the datasets are $T_{train} = 0.4$ (first row) and $T_{train} = 1$ (second row). The five columns are five different methods NeuralODE, HNN, IHNN, HRK, and NSSNN, respectively. The red line denotes the ground truth; the blue line denotes the prediction, which are perfectly overlapping in NSSNN. The prediction ability of HNN and IHNN improves significantly with the decreasing of $T_{train}$ of the dataset which however may be hard to obtain in the actual experimental measurements.}
  \label{fig:Spring}
\end{figure}

\begin{figure}
  \centering
  \includegraphics[width=0.95\linewidth]{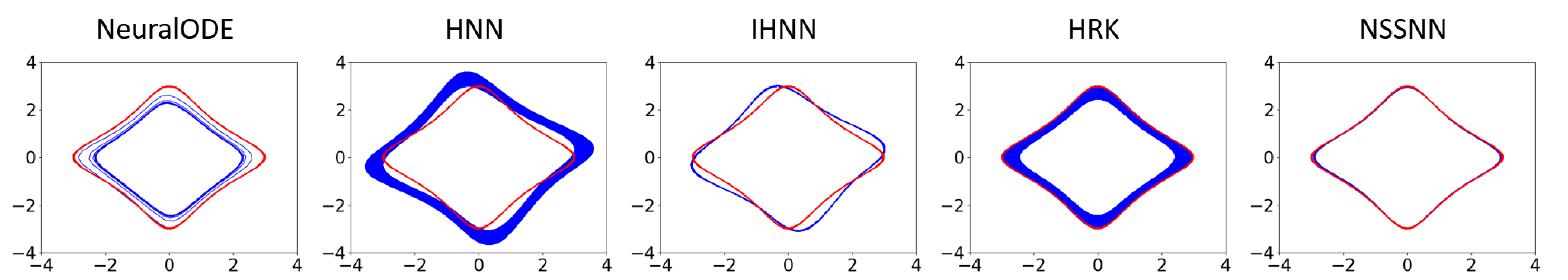}
  \caption{Comparison of prediction results of $(q,p)$ for the Tao's system $\mathcal H= 0.5(q^2+ 1)(p^2+1)$ from $t= 0$ to $t= 20000$ with $(q_0,p_0) = (0,-3)$. The network is trained by the dataset with noise $\sim 0.05U(-1,1)$, and the time span of the dataset is $T_{train} = 0.2$. The red line denotes the ground truth; the blue line denotes the prediction, which are perfectly overlapping in NSSNN.}
  \label{fig:Tao}
\end{figure}

For fair comparison, we adopt the same network structure (except that the dimension of output layer in NeuralODE is two times larger than that in the other four), the same $L1$ loss function and same size of the dataset, and the precision of all integral schemes is second order, and the other parameters keep consistent with the one in Section \ref{sec:Training}. The time derivative in the dataset for training HNN and IHNN is obtained by the first difference method
\begin{equation}
    \frac{\textrm{d} \bm q}{\textrm{d} t }\approx \frac{\bm q(T_{train})-\bm q(0)}{T_{train}}~~~~\textrm{and}~~~~\frac{\textrm{d} \bm p}{\textrm{d} t }\approx \frac{\bm p(T_{train})-\bm q(0)}{T_{train}}.
    \label{eq:dpq}
\end{equation}

Figure \ref{fig:Spring} demonstrates the differences between the five methods using a spring system $\mathcal H= 0.5(q^2+ p^2)$ with different time span $T_{train} = 0.4,~1$ and same time step $\textrm{d} t = 0.2$.
We can see that by introducing the nonseparable symplectic integrator into the prediction of the Hamiltonian system, NSSNN has a stronger long-term predicting ability than all the other methods. In addition, the prediction of HNN and IHNN lies in the dataset with time derivative; consequently, it will lead to a larger error when the given time span $T_{train}$ is large.

Moreover, the datasets obtained by (\ref{eq:dpq}) in HNN and IHNN are sensitive to noise. Figure \ref{fig:Tao} compares the predictions of $(q,p)$ for the system $\mathcal H= 0.5(q^2+ 1)(p^2+1)$, where the network is trained by the dataset with noise $\sim 0.05U(-1,1)$, along with time span $T_{train}=0.2$ and time step $\textrm{d} t=0.02$. Under the condition with noise, NSSNN still performs well compared with other methods. Also, we compare the convergent error of a series of Hamiltonian systems with different $\mathcal H$ trained with noisy data in Appendix \ref{sec:experiments}, which generally shows better robustness than HNN does.

\section{Modeling vortex dynamics of multi-particle system}\label{subsec:vortex}
For two-dimensional vortex particle systems, the dynamical equations of particle positions $(x_j,y_j),~j = 1,2,\cdots,N_v$ with particle strengths $\Gamma_j$ can be written in the generalized Hamiltonian form as
\begin{equation}
\begin{aligned}
   \Gamma_j \frac{\textrm {d} x_j}{\textrm d t}  = -\frac{\partial \mathcal{H}^p}{\partial y_j},~~~~
    \Gamma_j \frac{\textrm d y_j}{\textrm d t}  = \frac{\partial \mathcal{H}^p}{\partial x_j}
,\\~~~~\textrm{with}~~~~
   \mathcal H^p =\frac{1}{4\pi} \sum_{j,k=1}^{N_v} \Gamma_j\Gamma_k \log (|x_j-x_k|).
    \label{eq:Hp}
\end{aligned}
\end{equation}

By including the given particle strengths $\Gamma_j$ in Algorithm \ref{alg:int_net},
we can still adopt the method mentioned above to learn the Hamiltonian in (\ref{eq:Hp}) when there are fewer particles.
However, considering a system with $N_v\gg 2$ particles, the cost to collect training data from all $N_v$ particles might be high, and the training process can be time-consuming. Thus, instead of collecting information from all $N_v$ particles to train our model, we only use data collected from two bodies as training data to make predictions of the dynamics of $N_v$ particles.

Specifically, we assume the interactive models between particle pairs with unit particle strengths $\Gamma_j=1$ are the same, and their corresponding Hamiltonian can be represented as network $\hat{\mathcal H}_{\theta}(\bm x_j,\bm x_k)$, based on which the corresponding Hamiltonian of $N_v$ particles can be written as \cite{battaglia2016interaction,sanchez2019hamiltonian}
\begin{equation}
    \mathcal{H}_{\theta}^{p} = \sum_{i, j = 1}^{N_v} \Gamma_j \Gamma_k \hat{\mathcal H}_{\theta}(\bm x_j,\bm x_k).
    \label{eq:Hptheta}
\end{equation}
We embed (\ref{eq:Hptheta}) into the symplectic integrator that includes $\Gamma_j$ to obtain the final network architecture.

The setup of the multi-particle problem is similar to the previous problems. The training time span is $T_{train} = 0.01$ while the prediction period can be up to $T_{predict}=40$. We use 2048 clean data samples to train our model. The training process takes about 100 epochs for the loss to converge. In Figure \ref{fig:TaylorVortex}, we use our trained model to predict the dynamics of 6000-particle systems, including Taylor and Leapfrog vortices. We generate results of Taylor vortex and Leapfrop vortex using NSSNN and HNN and compare them with the ground truth. Vortex elements are used with corresponding initial vorticity conditions of Taylor vortex and Leapfrop vortex \cite{Qu2019}. The difficulty of the numerical modeling of these two systems lies in the separation of different dynamical vortices instead of having them merging into a bigger structure. In both cases, the vortices evolved using NSSNN are separated nicely as the ground truth shows, while the vortices merge together using HNN.

\begin{figure}
  \centering
  \includegraphics[width=1.0\linewidth]{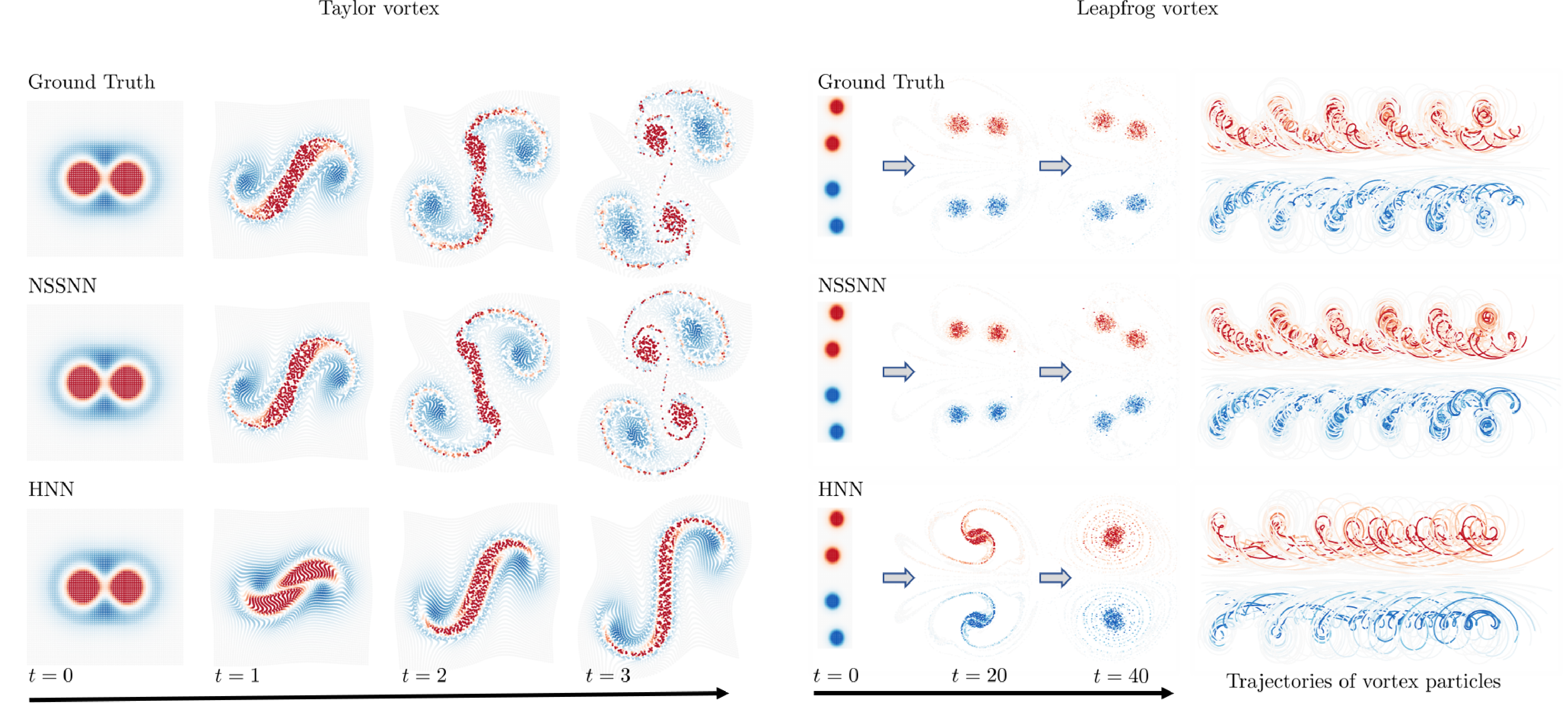}\hfill
  \caption{Taylor and Leapfrog vortex. We generate results of Taylor vortex and Leapfrop vortex using NSSNN and HNN, and compare them with the ground truth. 6000 vortex elements are used with corresponding initial vorticity conditions of Taylor vortex and Leapfrop vortex.}
  \label{fig:TaylorVortex}
\end{figure}
\section{Limitations}
The network with the embedded integrator is often more time-consuming to train than the one based on the dataset with time derivative. For example, the ratio of training time of the methods HNN and NSSNN is $1:3$ when $\textrm{d}t = T_{train}$, and the training time of the recurrent networks further increases with the decreasing of $\textrm{d}t$. Although a smaller $\textrm{d}t$ often has higher discretization accuracy, there is a tradeoff between training cost and predicting accuracy. Additionally, a smaller $\textrm{d}t$ may potentially cause gradient explosion. In this case, we may want to use the adjoint method instead. Another limitation lies in the assumption that the symplectic structure is conserved. In real-world systems, there could be dissipation that makes this assumption unsatisfied.

\section{Conclusions}
We incorporate a classic ideal that maps a nonseparable system to a higher dimensional space making it quasi-separable to construct symplectic networks. With the intrinsic symplectic structure, NSSNN possesses many benefits compared with other methods. In particular, NSSNN is the first method that can learn the vortex dynamical system, and accurately predict the evolution of complex vortex structures, such as Taylor and Leapfrog vortices. NSSNN, based on the first principle of learning complex systems, has potential applications in fields of physics, astronomy, and weather forecast, etc. We will further explore the possibilities of neural networks with inherent structure-preserving ability in fields like 3D vortex dynamics and quantum turbulence.
In addition, we will also work on general applications of NSSNN with datasets based on images or other real scenes through automatically identifying coordinate variables of Hamiltonian systems based on neural networks.

\section*{Acknowledgments}
This project is supported in part by Neukom Institute CompX Faculty Grant, Burke Research Initiation Award, and ByteDance Gift Donation. Yunjin Tong is supported by the Dartmouth Women in Science Project (WISP), Undergraduate Advising and Research Program (UGAR), and Neukom Scholars Program.

\bibliography{refs}
\bibliographystyle{plain}

\appendix
\section{Network architecture}\label{sec:architecture}
Figure \ref{fig:NSSNNnet}(a) shows the forward pass of NSSNN is composed of a forward pass through a
differentiable symplectic integrator as well as a backpropagation step through the model. Figure \ref{fig:NSSNNnet}(b) plots the schematic diagram of NSSNN.
For the constructed network $\mathcal{H}_{\theta}(\bm q, \bm p)$, we integrate (\ref{eq:overlineHamilton}) by using the second-order symplectic integrator \cite{Tao2016}. Specifically, The input layer of the integrator is $(\bm q,\bm p, \bm x, \bm y) = (\bm q_0,\bm p_0,\bm q_0,\bm p_0)$ at $t = t_0$ and the output layer is $(\bm q,\bm p, \bm x, \bm y) = (\bm q_n,\bm p_n,\bm x_n,\bm y_n)$ at $t = t_0 + n \textrm{d} t$. The recursive relations of $(\bm q_i,\bm p_i,\bm x_i,\bm y_i), i = 1,2,\cdots,n$, are expressed by the algorithm \ref{alg:int_net}.
\begin{figure}
  \centering
  \includegraphics[width=0.95\linewidth]{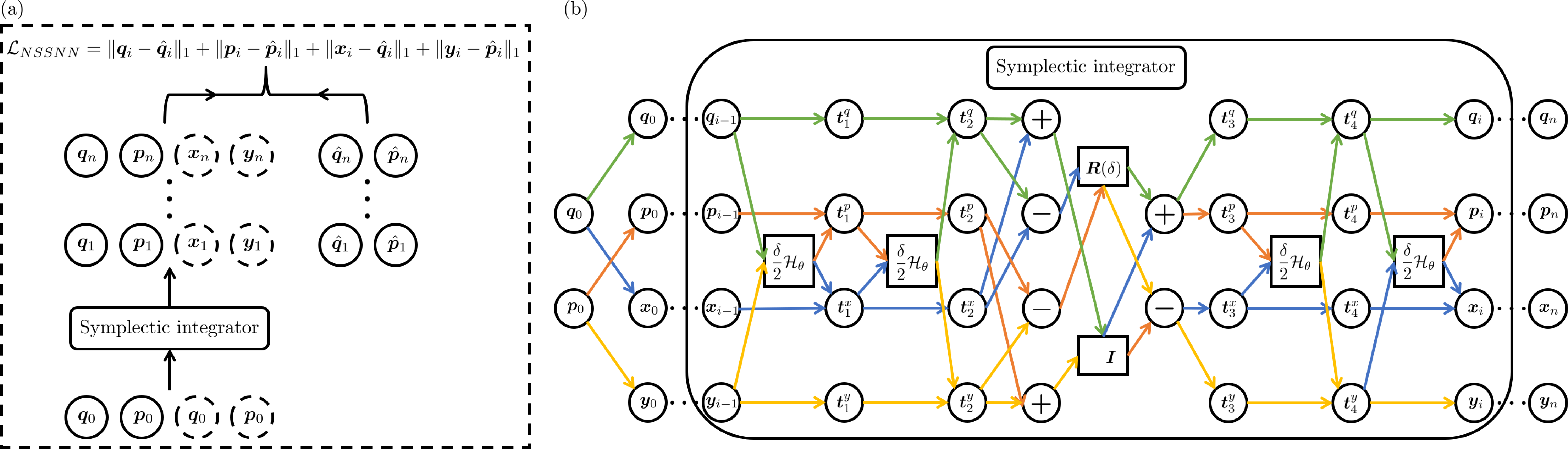}
  \caption{(a) The forward pass of an NSSNN is composed of a forward pass through a
differentiable symplectic integrator as well as a backpropagation step through the model. (b) The schematic diagram of NSSNN.}
  \label{fig:NSSNNnet}
\end{figure}
\section{Symplectomorphisms}\label{sec:Sym}
\begin{figure}
  \centering
  \includegraphics[width=.25\linewidth]{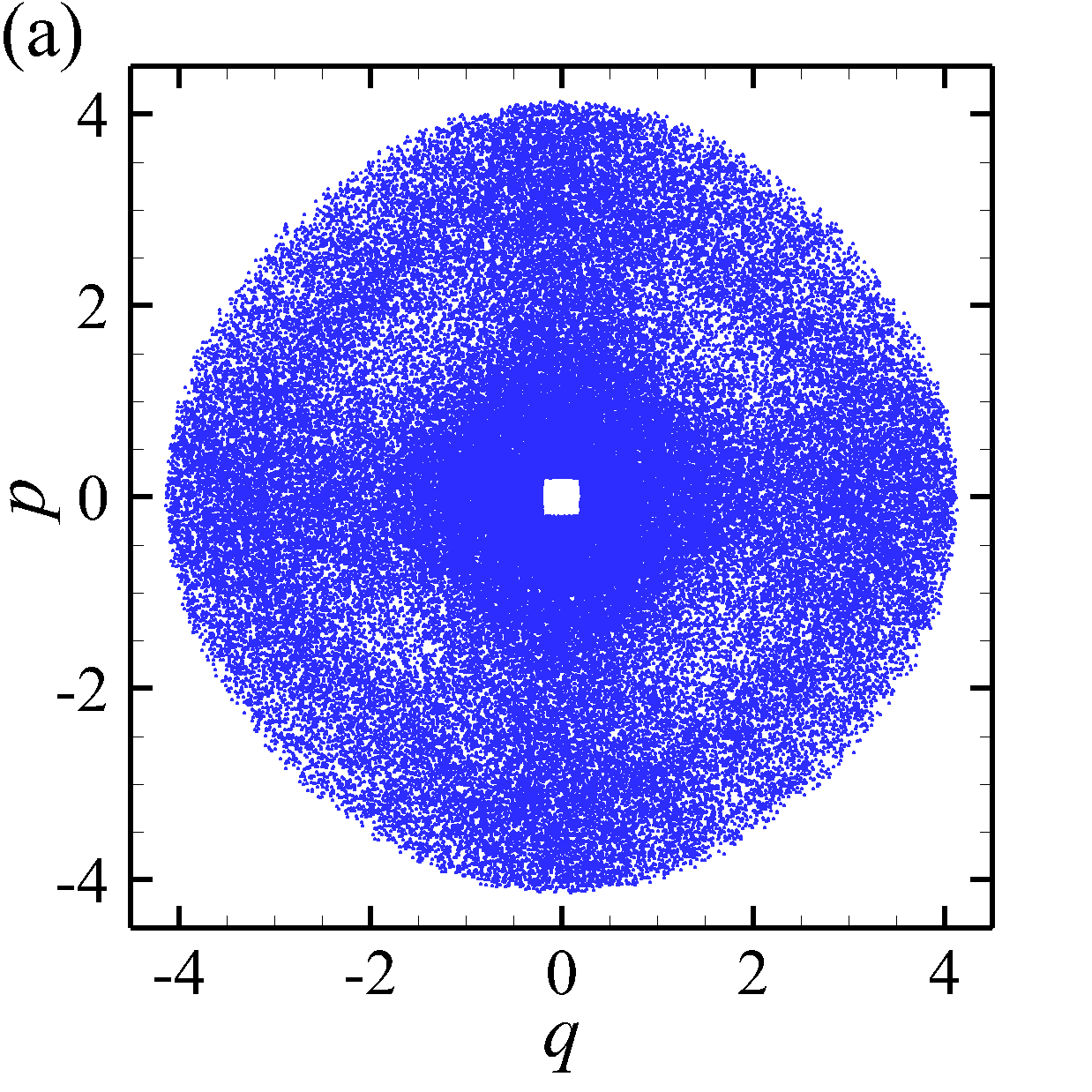}\hfill
  \includegraphics[width=.25\linewidth]{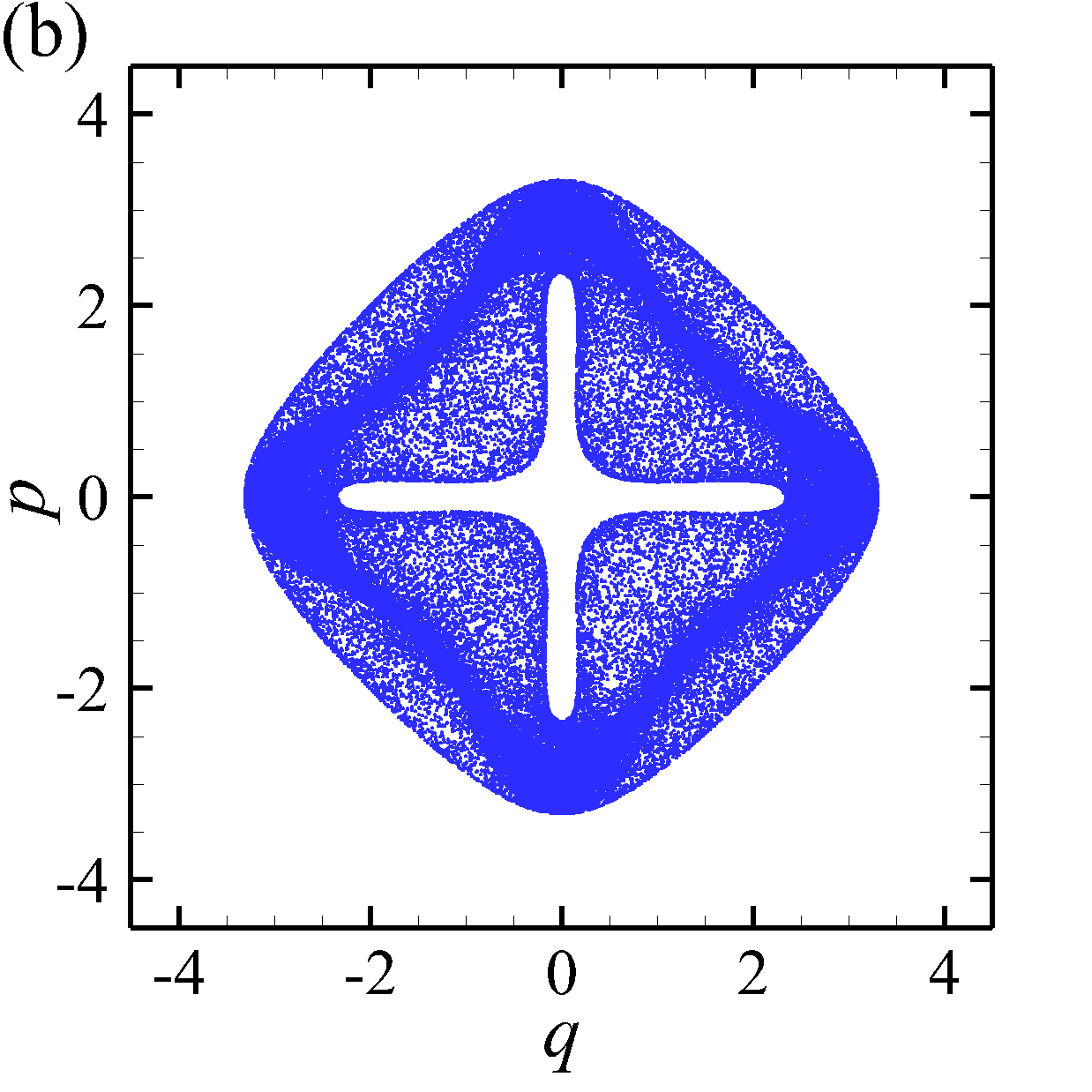}\hfill
  \includegraphics[width=.25\linewidth]{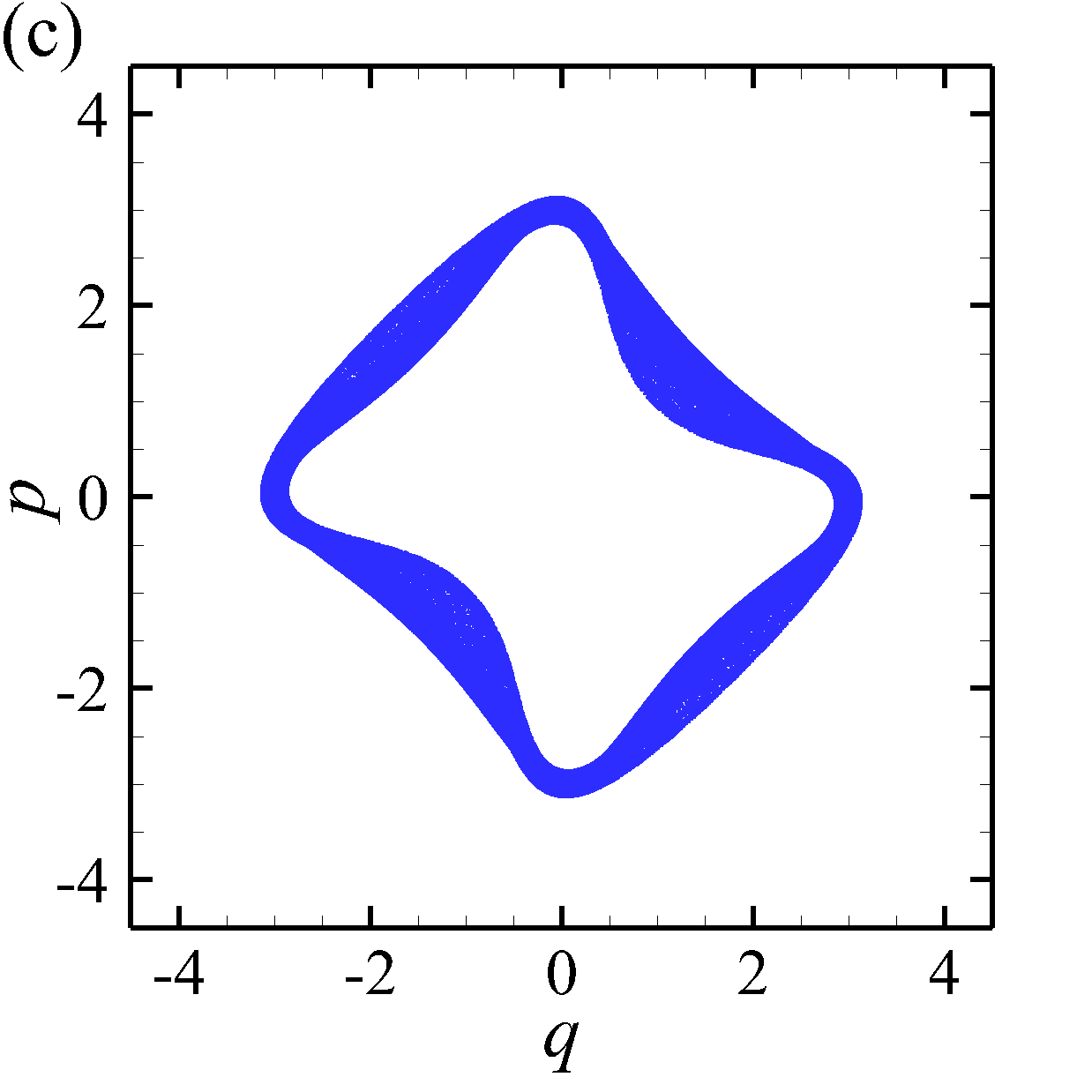}\hfill
  \includegraphics[width=.25\linewidth]{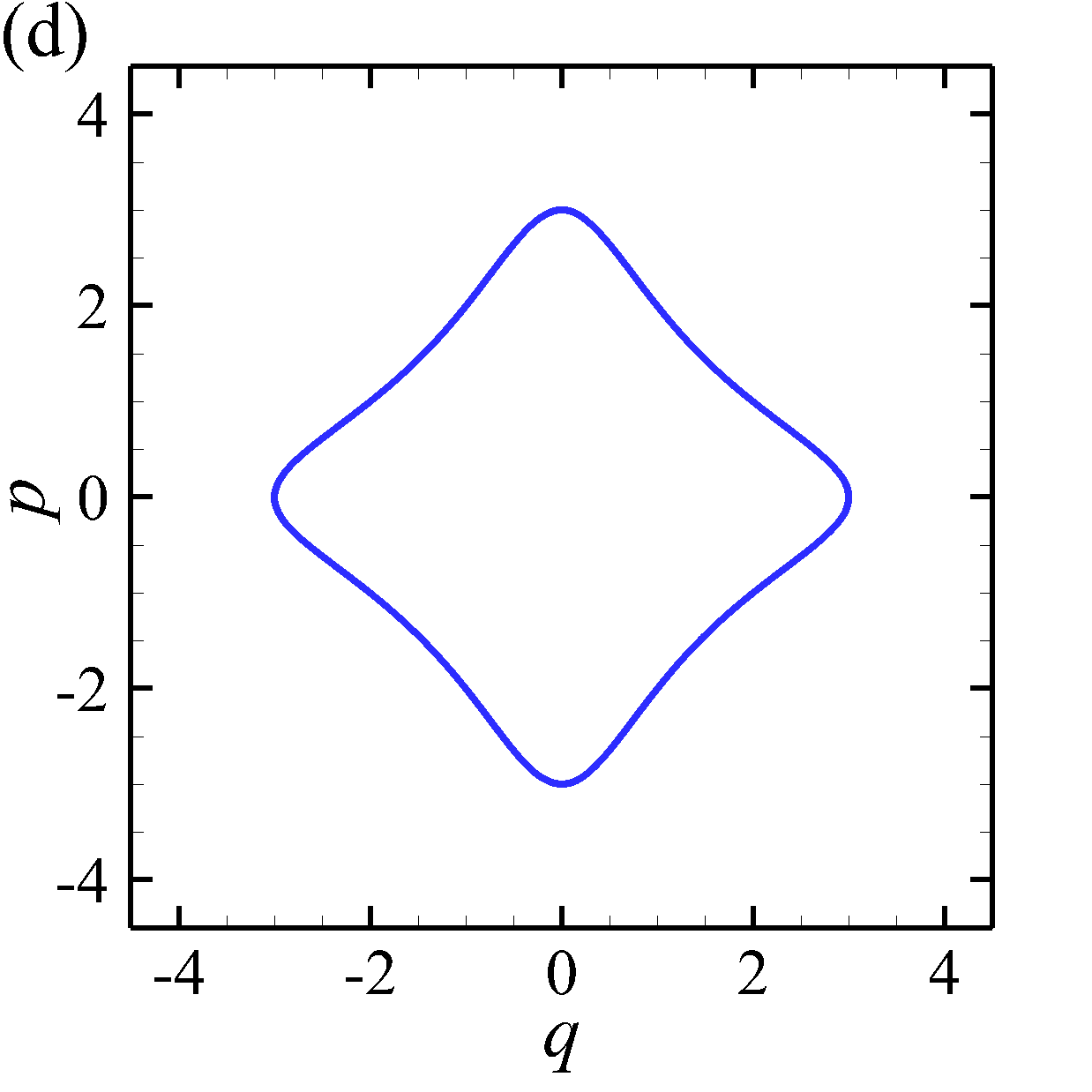}
  \\[0.5mm]
  \includegraphics[width=.25\linewidth]{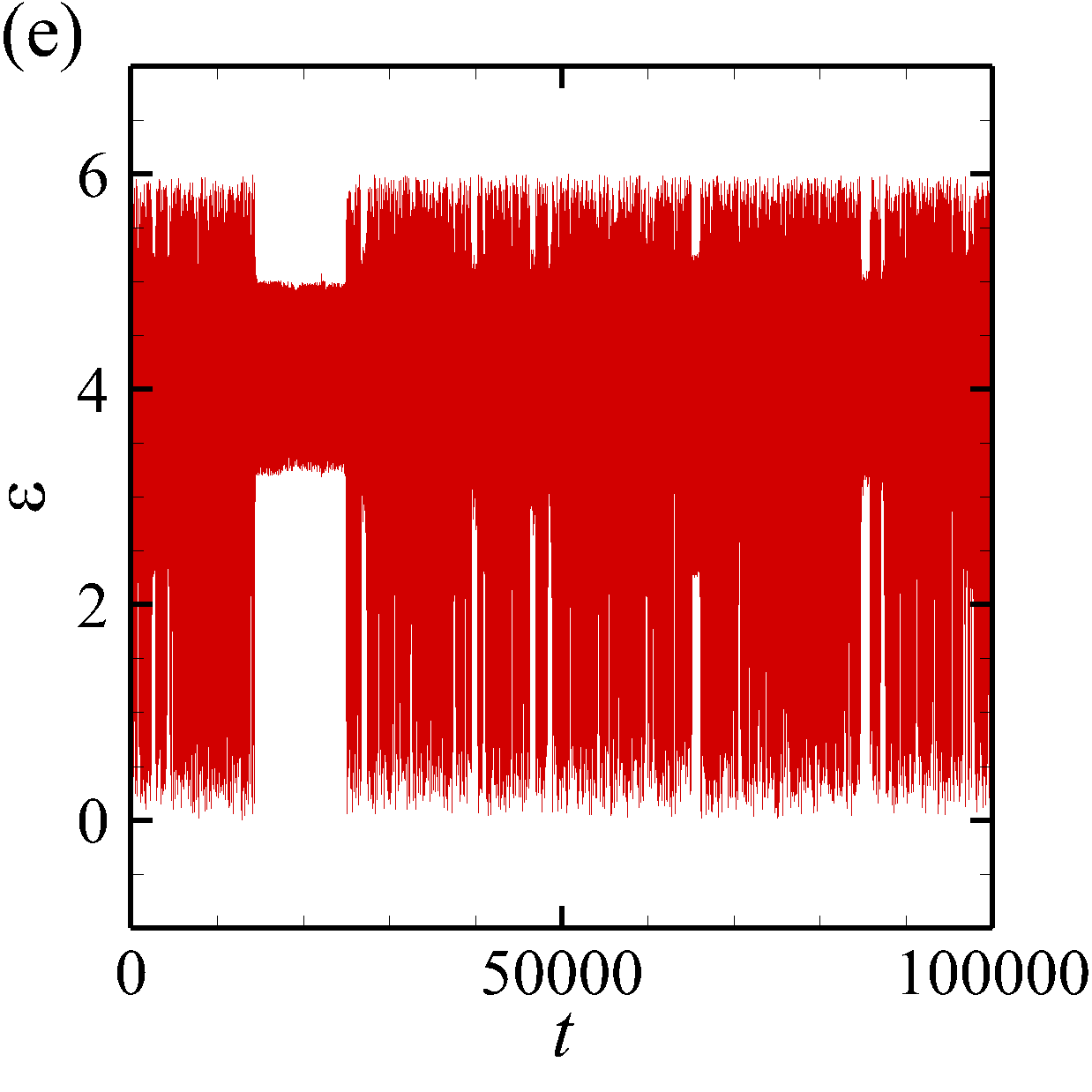}\hfill
  \includegraphics[width=.25\linewidth]{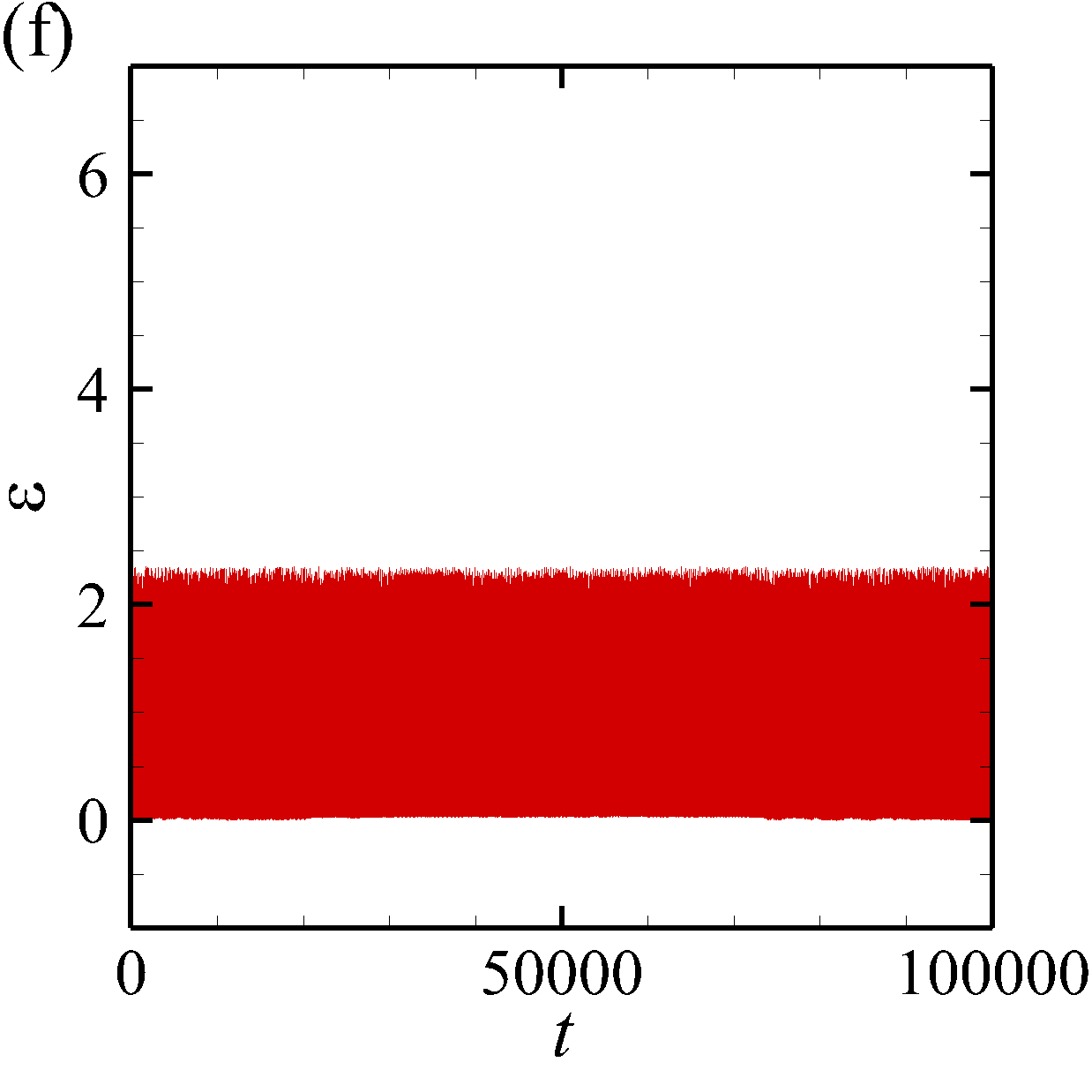}\hfill
  \includegraphics[width=.25\linewidth]{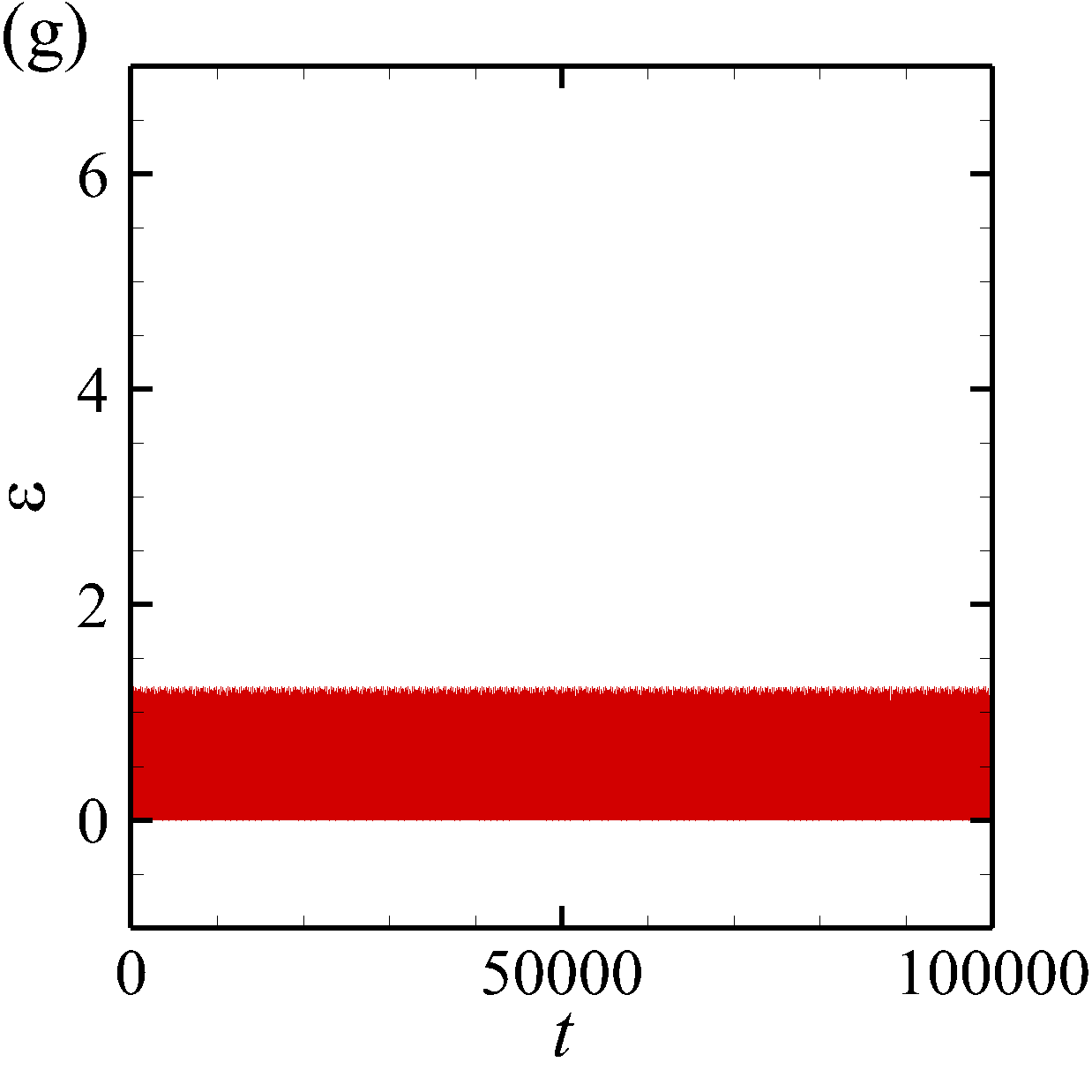}\hfill
  \includegraphics[width=.25\linewidth]{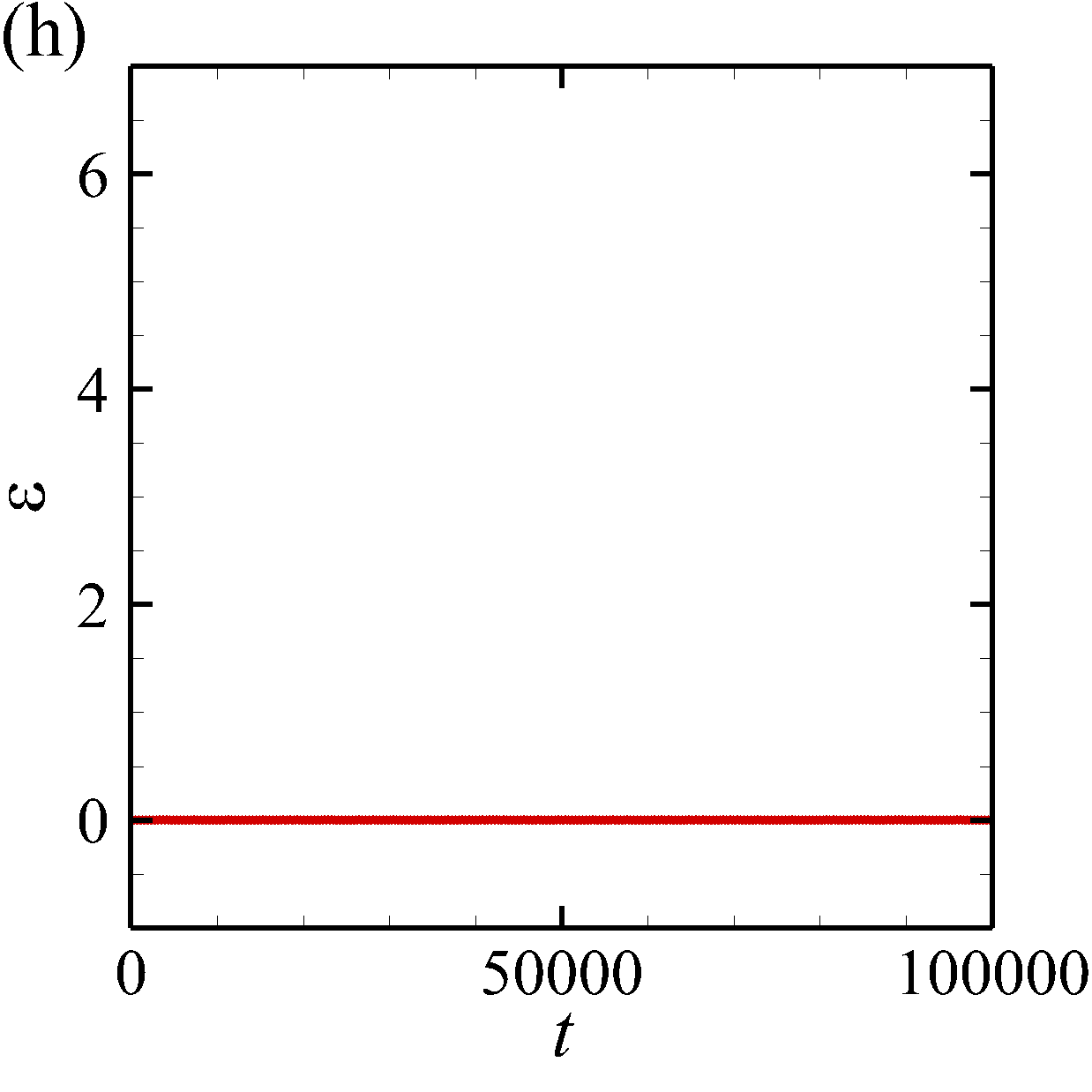}\hfill
  \caption{Comparison of different $\omega$ (from left to right columns)
with (a) and (e) $\omega = 0$, (b) and (f) $\omega=0.8$, (c) and (g) $\omega=0.9$, (d) and (h) $\omega=10$ in the symplectic integrator for nonseparable Hamiltonian $H=(q^2+1)(p^2+1)/2$. The upper row: projection of a trace $[q(t),p(t),x(t),y(t)]$ with $[q(0),p(0),x(0),y(0)] = (-3,0,-3,0)$ onto the $q-p$ plane. The bottom row: deviation $\epsilon = \|(q,p)-(x,y)\|_2$ between $(q,p)$ and $(x,y)$. }.
  \label{fig:Error}
\end{figure}
One of the most important features of the time evolution of Hamilton's equations is that it is a symplectomorphism, representing a transformation of phase space that is volume-preserving.
In the setting of canonical coordinates, symplectomorphism means the transformation of the phase flow of a Hamiltonian system conserves the symplectic two-form
\begin{equation}
    \textrm{d} \bm q\wedge \textrm{d} \bm p \equiv \sum_{j=1}^{N}\left(\textrm{d}q_j\wedge \textrm{d}p_j\right),
    \label{eq:ddpq}
\end{equation}
where $\wedge$ denotes the wedge product of two differential forms. The rules of wedge products can be found in \cite{lee2010introduction}.
In the two-dimensional case, (\ref{eq:ddpq}) can be understood as the area element of the surface. In this case, the symplectomorphism can be interpreted as the area element of the surface is constant.
As proved below, our constructed network structure intrinsically preserves Hamiltonian structure.

\begin{theorem}
For a given $\delta$, the mapping $\bm \phi_1^{\delta}$, $\bm \phi_2^{\delta}$, and $\bm \phi_3^{\delta}$ in (\ref{eq:phi1}) and (\ref{eq:phi2}) are symplectomorphisms.
\label{thm:sym_phi}
\end{theorem}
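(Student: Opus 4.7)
The plan is to recognize each map $\bm\phi_i^\delta$ as the exact time-$\delta$ Hamiltonian flow of one of the three pieces $\mathcal{H}_A$, $\mathcal{H}_B$, $\omega\mathcal{H}_C$ of $\overline{\mathcal{H}}$ on the extended phase space $(\bm q,\bm p,\bm x,\bm y)$ equipped with the symplectic 2-form $\mathrm{d}\bm q\wedge\mathrm{d}\bm p+\mathrm{d}\bm x\wedge\mathrm{d}\bm y$. Once each $\bm\phi_i^\delta$ is identified with a genuine Hamiltonian flow, the classical theorem that the time-$t$ flow of any $C^1$ Hamiltonian preserves the symplectic form (see e.g. Arnold, Chapter 8) yields the symplectomorphism property immediately. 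So the proof reduces to verifying three explicit integrations.

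First I would treat $\bm\phi_1^\delta$. Since $\mathcal{H}_A=\mathcal{H}(\bm q,\bm y)$ depends only on $\bm q$ and $\bm y$ in the extended coordinates, writing out the Hamilton equations on the extended phase space gives $\dot{\bm q}=\partial\mathcal{H}_A/\partial\bm p=0$ and $\dot{\bm y}=-\partial\mathcal{H}_A/\partial\bm x=0$, so $\bm q$ and $\bm y$ are constants of motion. The remaining equations $\dot{\bm p}=-\partial\mathcal{H}(\bm q,\bm y)/\partial\bm q$ and $\dot{\bm x}=\partial\mathcal{H}(\bm q,\bm y)/\partial\bm y$ then have constant right-hand sides and integrate exactly over time $\delta$ to the explicit update in (\ref{eq:phi}). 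An identical argument, exchanging the roles of $(\bm q,\bm y)$ with $(\bm x,\bm p)$, shows $\bm\phi_2^\delta$ is the exact time-$\delta$ flow of $\mathcal{H}_B$.

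The slightly more delicate piece is $\bm\phi_3^\delta$, which I expect to be the main obstacle because it requires actually integrating a nontrivial linear ODE rather than a frozen-coefficient one. Hamilton's equations for $\omega\mathcal{H}_C=\tfrac{\omega}{2}(\|\bm q-\bm x\|_2^2+\|\bm p-\bm y\|_2^2)$ are
\begin{equation}
\dot{\bm q}=\omega(\bm p-\bm y),\quad \dot{\bm p}=-\omega(\bm q-\bm x),\quad \dot{\bm x}=-\omega(\bm p-\bm y),\quad \dot{\bm y}=\omega(\bm q-\bm x).
\end{equation}
Adding paired equations gives $\dot{\bm q}+\dot{\bm x}=0$ and $\dot{\bm p}+\dot{\bm y}=0$, so $\bm q+\bm x$ and $\bm p+\bm y$ are first integrals. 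Subtracting gives $\dot{\bm u}=2\omega\bm v$, $\dot{\bm v}=-2\omega\bm u$ for $\bm u=\bm q-\bm x$, $\bm v=\bm p-\bm y$, a harmonic rotation whose exponential is exactly $\bm R(\delta)$. Reconstructing $(\bm q,\bm p,\bm x,\bm y)$ at time $\delta$ from the conserved sums and the rotated differences reproduces the formula for $\bm\phi_3^\delta$ in (\ref{eq:phi}) verbatim.

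Finally, I invoke the fact that for any smooth Hamiltonian $K$ on a symplectic manifold, the time-$t$ map $\Psi_K^t$ of the associated Hamiltonian vector field satisfies $(\Psi_K^t)^\ast\Omega=\Omega$; equivalently, its Jacobian $J$ satisfies $J^\top \Omega J=\Omega$ where $\Omega$ is the $4N\times 4N$ canonical matrix on the extended phase space. Applied to $K=\mathcal{H}_A,\mathcal{H}_B,\omega\mathcal{H}_C$ at $t=\delta$, this immediately yields that $\bm\phi_1^\delta,\bm\phi_2^\delta,\bm\phi_3^\delta$ are all symplectomorphisms, proving the theorem. As a brief sanity check one can also verify $J^\top\Omega J=\Omega$ directly for $\bm\phi_3^\delta$ using the block structure of $\bm R(\delta)$ and the identity $\bm R(\delta)^\top\bm R(\delta)=\bm I$, which confirms the conclusion without relying on the flow theorem.
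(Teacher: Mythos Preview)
Your proof is correct and takes a genuinely different route from the paper's. The paper verifies symplecticity by direct computation on the level of the 2-form: for $\bm\phi_1^\delta$ it expands $\mathrm{d}\bm t_1^q\wedge\mathrm{d}\bm t_1^p+\mathrm{d}\bm t_1^x\wedge\mathrm{d}\bm t_1^y$, and the cross term $\mathrm{d}\bm q\wedge\mathrm{d}\bm y$ cancels by equality of mixed partials of $\mathcal{H}_\theta$; $\bm\phi_2^\delta$ is ``similarly'' dispatched and $\bm\phi_3^\delta$ is asserted to follow by a direct linear-algebra check. You instead identify each $\bm\phi_i^\delta$ as the \emph{exact} time-$\delta$ flow of $\mathcal{H}_A$, $\mathcal{H}_B$, $\omega\mathcal{H}_C$ respectively, and then appeal once to the general fact that Hamiltonian flows are symplectic. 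Your argument is more conceptual and, as a bonus, explains why the Strang composition in Algorithm~\ref{alg:int_net} is a bona fide second-order splitting integrator for $\overline{\mathcal{H}}$: each factor is an exact sub-flow, not merely a symplectic approximation. The paper's approach, on the other hand, is self-contained and avoids invoking any external flow theorem. One minor point: the flow-preserves-$\Omega$ theorem needs the Hamiltonian to be $C^2$ (so the vector field is $C^1$), not just $C^1$; this is harmless here since $\mathcal{H}_\theta$ with sigmoid activations is smooth, but you should state the hypothesis correctly.
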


\begin{proof}
Let
\begin{equation}
(\bm t_j^q, \bm t_j^p, \bm t_j^x, \bm t_j^y) = \bm \phi_j^{\delta}(\bm q, \bm p , \bm x, \bm y),~~j= 1,2,3.
\end{equation}

From the first equation of (\ref{eq:phi1}), we have
\begin{equation}
\begin{aligned}
&\textrm{d}\bm t_1^q \wedge \textrm{d}\bm t_1^p + \textrm{d}\bm t_1^x \wedge \textrm{d}\bm t_1^y\\
=& \textrm{d}\bm q \wedge \textrm{d}\left[\bm p - \delta \frac{\partial \mathcal{H}_{\theta}(\bm q, \bm y)}{\partial \bm q} \right]+ \textrm{d}\left[\bm x + \delta \frac{\partial \mathcal{H}_{\theta}(\bm q, \bm y)}{\partial \bm p} \right]\wedge \textrm{d}\bm y \\
=& \textrm{d}\bm q \wedge \textrm{d}\bm p+\textrm{d}\bm x \wedge \textrm{d}\bm y + \delta \left[\frac{\partial \mathcal{H}_{\theta}(\bm q, \bm y)}{\partial \bm q \partial \bm y}-\frac{\partial \mathcal{H}_{\theta}(\bm q, \bm y)}{\partial \bm y \partial \bm q}\right]\textrm{d}\bm q \wedge \textrm{d}\bm y\\
=&\textrm{d}\bm q \wedge \textrm{d}\bm p+\textrm{d}\bm x \wedge \textrm{d}\bm y.
\label{eq:dtk}
\end{aligned}
\end{equation}
Similarly, we can prove that $\textrm{d}\bm t_2^q \wedge \textrm{d}\bm t_2^p + \textrm{d}\bm t_2^x \wedge \textrm{d}\bm t_2^y =\textrm{d}\bm q \wedge \textrm{d}\bm p+\textrm{d}\bm x \wedge \textrm{d}\bm y$.
In addition, from (\ref{eq:phi2}), we can directly deduce that $\textrm{d}\bm t_3^q \wedge \textrm{d}\bm t_3^p + \textrm{d}\bm t_3^x \wedge \textrm{d}\bm t_3^y =\textrm{d}\bm q \wedge \textrm{d}\bm p+\textrm{d}\bm x \wedge \textrm{d}\bm y$.
\end{proof}
Suppose that $\Phi_1$ and $\Phi_2$ are two symplectomorphisms. Then, it is easy to show that their composite map $\Phi_2\circ \Phi_1$ is also symplectomorphism due to the chain rule. Thus, the symplectomorphism of algorithm \ref{alg:int_net} can be guaranteed by the theorem \ref{thm:sym_phi}.

\section{Determining Coefficient} \label{sec:omega}

To further elucidation, the Hamiltonian $\mathcal{H}_A + \mathcal{H}_B$ without the binding, i.e., $\overline{\mathcal {H}}$ with $\omega = 0$, in extended phase space $(\bm q, \bm p, \bm x, \bm y)$ may not be integrable, even if $\mathcal{H}(\bm q, \bm p)$ is integrable in the original phase space $(\bm q, \bm p)$. However, $\mathcal{H}_C$ is integrable. Thus, as $\omega$ increases, a larger
proportion in the phase space for $\overline{\mathcal {H}}$ corresponds to regular behaviors \cite{Kolmogorov1954}. For $H(q,p) = (q^2+1)(p^2+1)/2$, shown in Fig. \ref{fig:Error}, we compare the trajectories starting from $[q(0),p(0),x(0),y(0)] = (-3,0,-3,0)$ calculated by the symplectic integrator \cite{Tao2016} with different $\omega$, where the calculation accuracy is second order accuracy and the time interval is 0.001. As Figs. \ref{fig:Error}(a), (b), (c), and (d) shown, the chaotic region in phase space is significantly decreasing until forming a stable limit cycle. We define $\epsilon = \|(q,p)-(x,y)\|_2$ as the calculation error of this system, shown in Figs. (e), (f), (g), and (h) that the error is decreasing with $\omega$ increasing, which fits the quantitative results of phase trajectory well.

\section{Other experiments}\label{sec:experiments}
\begin{table}
  \caption{Comparison of Prediciton error and Hamiltonian deviation between NeuralODE, HNN and NSSNN }
  \centering
  \scriptsize
  \setlength{\tabcolsep}{0.8mm}{
  \begin{tabular}{lcccccc}
  \hline
  &\multicolumn{3}{c}{Prediciton error}& \multicolumn{3}{c}{Hamiltonian deviation} \\
  Problems & NeuralODE & HNN &   NSSNN &NeuralODE & HNN &   NSSNN \\
  \hline
  Pendulum& $3.4\times 10^{-2}$ &$3.1\times 10^{-2}$&  $\bm{2.6\times 10^{-2}}$  &$1.5\times 10^{-2}$&$1.3\times10^{-2}$&$\bm{7.4\times10^{-3}}$\\
  Lotka-Volterra&$\bm{2.2\times 10^{-2}}$&$3.9\times 10^{-2}$&$2.7\times 10^{-2}$&$\bm{7.4\times 10^{-3}}$&$8.8\times10^{-3}$&$7.5\times10^{-3}$\\
  Spring&$2.1\times 10^{-2}$&$2.1\times 10^{-2}$&$\bm{1.6\times 10^{-2}}$&$9.3\times 10^{-3}$&$\bm{6.7\times10^{-3}}$&$\bm{6.7\times10^{-3}}$\\
    H\'enon--Heiles&$1.0\times 10^{-1}$&$9.4\times 10^{-2}$&$\bm{8.4\times 10^{-2}}$&$3.7\times 10^{-2}$&$4.0\times10^{-2}$&$\bm{3.5\times10^{-2}}$\\
   Tao's example&$3.7\times 10^{-2}$&$2.6\times 10^{-2}$&$\bm{2.2\times 10^{-2}}$&$1.4\times 10^{-2}$&$1.1\times10^{-2}$&$\bm{8.2\times10^{-3}}$\\
  Schr\"odinger&$8.7\times 10^{-2}$&$5.9\times 10^{-2}$&$\bm{5.7\times 10^{-2}}$&$3.8\times 10^{-2}$&$2.3\times10^{-2}$&$\bm{2.0\times10^{-2}}$\\
  Two Vortices&$2.1\times 10^{-2}$&$7.7\times 10^{-3}$&$\bm{3.4\times 10^{-3}}$&$1.5\times 10^{-2}$&$2.8\times10^{-3}$&$\bm{2.1\times10^{-3}}$\\
  Four vortices&$3.4\times 10^{-2}$&$9.4\times 10^{-3}$&$\bm{6.9\times 10^{-3}}$&$8.2\times 10^{-2}$&$1.4\times10^{-2}$&$\bm{3.4\times10^{-3}}$\\
  \hline
  \end{tabular}}
  \label{tab:problems}
\end{table}

We consider the pendulum, the Lotka--Volterra, the Spring, the H\'enon--Heiles, the Tao's example \cite{Tao2016},  the Fourier form of nonlinear Schr\"odinger and the vortex particle systems in our implementation. The Hamiltonian energies of these systems (except vortex particle system) are summarized as follows:

Pendulum system: $\mathcal {H}(q, p) =3(1-\cos(q)) + p^2$. Lotka--Volterra system: $\mathcal {H}(q, p) =p-e^{p}+2q-e^{q}$. Spring system: $\mathcal {H}(q, p) =q^2+p^2$. H\'enon--Heiles system: $\mathcal  {H}(q_1,q_2, p_1,p_2)= (p_1^2+ p_2^2)/2+(q_1^2+ q_2^2)+(q_1^2 q_2-q_2^3/3)/2$. Tao's example \cite{Tao2016}: $\mathcal {H}(q, p) =(q^2+1)(p^2+1)/2$. Fourier form of nonlinear Schr\"odinger equation: $\mathcal{H}(q_1,q_2,p_1,p_2) =\left[(q_1^2+p_1^2)^2+(q_2^2+p_2^2)^2\right]/4-(q_1^2q_2^2+p_1^2p_2^2-q_1^2p_2^2-p_1^2q_2^2+4q_1q_2p_1p_2)$.

The network is trained by the dataset with noise $\sim 0.1U(-1,1)$. The training time span, integral time step, and validation time span are $0.01,~0.01$, and $0.1$, respectively. Table \ref{tab:problems} compares the Hamiltonian deviation $\epsilon_H = \|\mathcal H(\bm q_{\textrm{truth}},\bm p_{\textrm{truth}})-\mathcal H(\bm q_{\textrm{predict}},\bm p_{\textrm{predict}})\|_2 / \|\mathcal H(\bm q_{\textrm{truth}},\bm p_{\textrm{truth}})\|_2$ and the prediction error $\epsilon_p = \|\bm q_{\textrm{truth}}-\bm q_{\textrm{predict}}\|_1 + \|\bm p_{\textrm{truth}}-\bm p_{\textrm{predict}}\|_1$. It is clearly from the Table \ref{tab:problems} that NSSNN either outperforms or has similar performances as NeuralODE and HNN do.

\end{document}